\pdfoutput=1
\documentclass{article}
\usepackage{iclr2027_conference,times}

\usepackage{amsmath,amssymb,amsthm}
\usepackage{mathtools}
\usepackage{booktabs}
\usepackage{graphicx}
\usepackage{multirow}
\usepackage{enumitem}
\usepackage{xcolor}
\usepackage{hyperref}
\usepackage{url}

\iclrfinalcopy

\newtheorem{theorem}{Theorem}

\newtheorem{corollary}[theorem]{Corollary}

\theoremstyle{definition}

\theoremstyle{plain}  % remarks set like the propositions: bold heading, same body

\newcounter{restatesaved}
\NewDocumentEnvironment{restatable}{o m m +b}{%
  \IfNoValueTF{#1}{\begin{#2}#4\end{#2}}{\begin{#2}[#1]#4\end{#2}}%
  \expandafter\xdef\csname #3number\endcsname{\arabic{theorem}}%
  \expandafter\gdef\csname #3\endcsname{\restateresult{#2}{#3}{#1}{#4}}}{}
\NewDocumentCommand{\restateresult}{m m m +m}{%
  \begingroup
  \setcounter{restatesaved}{\value{theorem}}%
  \setcounter{theorem}{\numexpr\csname #2number\endcsname-1\relax}%
  \renewcommand{\label}[1]{}%
  \renewcommand{\theHtheorem}{restated.\arabic{theorem}}%
  \IfNoValueTF{#3}{\begin{#1}#4\end{#1}}{\begin{#1}[#3]#4\end{#1}}%
  \setcounter{theorem}{\value{restatesaved}}%
  \endgroup}

\newcommand{\E}{\mathbb{E}}
\newcommand{\R}{\mathbb{R}}
\newcommand{\piref}{\pi_{\mathrm{ref}}}
\newcommand{\pitheta}{\pi_\theta}
\newcommand{\method}{BA-DPO}
\title{BA-DPO: Bias-Adjusted Direct Preference\\ Optimization for Language Model Alignment}

\author{Antonio Ferrara\thanks{Equal contribution.} \\
Intesa Sanpaolo AI Research
\And
Alberto Rumi\footnotemark[1] \\
Intesa Sanpaolo AI Research
\And
Francesco Bonchi \\
Intesa Sanpaolo AI Research
}

\begin{document}
\addtocontents{toc}{\protect\setcounter{tocdepth}{-1}}

\maketitle
\lhead{Preprint.}

\begin{abstract}

Preference-based alignment methods such as Direct Preference Optimization (DPO) use pairwise preferences labeled by human annotators to fine-tune language models. However, annotators carry systematic biases toward some attributes: a name that signals a gender or an ethnicity, a persona, a language variety, a formatting convention, or length. If not properly addressed, these systematic biases can be absorbed and amplified during alignment. Existing methods address length bias or annotator disagreement, but fail to eliminate biases toward arbitrary attributes. To address this limitation, we propose \emph{Bias-Adjusted DPO} (BA-DPO), a generalization of DPO that adds one bias parameter per annotator toward responses carrying a declared attribute. 
We prove that the objective is convex in the bias parameters and that the votes identify each annotator's bias up to a shared constant. The remaining constant is what fixes the aligned model's attribute rate: by default the rate of the reference model, or a target rate, which we use to bring a biased policy to statistical parity.

On a corpus with planted biases, DPO drives the attribute from a balanced start to probability $0.96$ and BA-DPO removes $81$ to $95\%$ of that shift; on MultiPref with real annotators it removes about half of DPO's lengthening. Both hold at 0.5B with full fine-tuning and at 8B with LoRA, at no higher KL than DPO and no loss in judged quality.

\end{abstract}

\section{Introduction}
\label{sec:intro}

Preference-based alignment has become the dominant paradigm for shaping language model behavior from pairwise preference labels produced by human annotators. Standard techniques, whether they rely on reinforcement learning with a reward model \citep{christiano2017deep, ouyang2022training, bai2022hh} or optimize a cross-entropy loss over preference pairs directly, as in Direct Preference Optimization (DPO) \citep{dpo}, inherit the foundational assumption of the \emph{Bradley-Terry model} (BT) \citep{BT}: that every human-annotated pairwise comparison is a noisy but unbiased measurement of the quality gap between two responses. This assumption rarely holds. Human evaluators carry systematic preferences for attributes unrelated to quality, such as a name that signals a gender or an ethnicity \citep{bertrand2004emily}, a persona, a language variety, a formatting convention, or response length \citep{park2024disentangling, lu2024sampo, liu2024ld}. Because standard preference optimization treats every label as evidence of quality, systematic biases shared across annotators do not average out; instead, they are absorbed and amplified by the aligned policy as a preference for the attribute itself.

In practice, preference optimization cannot discern why a label was assigned. For instance, if annotators consistently favor answers signed with names signaling a specific demographic group, the objective simply treats the signature as a sign of response quality. Under standard DPO the effect is large: a policy that initially signs across demographics with equal probability shifts to placing probability $0.96$ on the favored marker. The same happens with real annotators who favor longer answers or Markdown styling (Section~\ref{sec:experiments}).

In this paper, we address the following alignment problem: \textit{given preference judgments from annotators and a declared attribute of the responses, how can we train a policy that does not inherit the annotators' shared bias toward that attribute?}

We introduce \textbf{Bias-Adjusted DPO (BA-DPO)}, which generalizes DPO to possibly biased annotators, where each annotator carries a parameter for how much they favour the declared attribute. The parameter is added to the reward and does not enter the partition function, so the reparameterization that lets DPO train without a reward model goes through unchanged (Proposition~\ref{prop:compat}).\\
We then characterise what the labels determine. The votes fix every annotator's bias relative to the others. What they leave open is one direction: raising all the biases together while lowering how much the policy itself favours the attribute changes no judgment (Theorem~\ref{thm:identif}, Corollary~\ref{prop:degeneracy}). That direction is what sets the attribute rate of the trained policy. Training from the reference leaves the rate the reference had, and stating a target rate moves it instead.

BA-DPO fundamentally differs from existing alignment corrections in its underlying formulation. Single-attribute heuristics like R-DPO \citep{park2024disentangling} and SamPO \citep{lu2024sampo} modify or regularize length distributions; the attribute is fixed in the objective, and as our experiments show, correcting it leaves another one in place or substitutes one spurious shortcut for another \citep{lamparth2026substitution}. Heterogeneity techniques like Group-DRO \citep{sagawa2020groupdro} and EM-DPO \citep{chidambaram2024emdpo} represent or reweight evaluator disagreement, which addresses how annotators differ rather than what they hold in common. Methods like ADPO \citep{adpo} remove unindexed response-level offsets, while BiasDPO \citep{biasdpo} addresses the converse problem of cleansing biased model outputs using clean preference data. By contrast, BA-DPO models label bias directly, as additive per-annotator scalars on declared attributes, which separates the bias the annotators share from the quality signal and removes it rather than accommodating it.
A more detailed discussion of prior related literature is provided in Appendix \ref{sec:related}.

The primary contributions of this paper are summarized as follows:
\begin{enumerate}[nosep,leftmargin=*]

    \item \textbf{Problem formulation and method:} We formalize the challenge of removing shared, attribute-directed annotator bias from preference optimization and propose BA-DPO, an efficient objective requiring one parameter per attribute when annotators are pooled and one per annotator and attribute otherwise, with no reward model and no dedicated network (Section~\ref{sec:method}).
        
    \item \textbf{Theoretical characterization:} We prove that the objective is convex in the bias parameters and that the votes fix every annotator's bias relative to the others, leaving one direction that sets the trained policy's attribute rate (Section~\ref{sec:theory}).
        
    \item \textbf{Empirical validation:} On a corpus with planted biases and on MultiPref \citep{multipref} with real annotators, at 0.5B with full fine-tuning and at 8B with LoRA, we show that BA-DPO removes $81$ to $95\%$ of DPO's attribute shift on the planted corpus and about half of its lengthening on MultiPref, at no higher KL than DPO and no loss in judged quality (Section~\ref{sec:experiments}).
        
\end{enumerate}

\section{Background}
\label{sec:background}

Given a prompt $x$ and two responses $y_1, y_2$, the Bradley-Terry (BT) model expresses the probability that $y_1$ is preferred over $y_2$ (denoted $y_1 \succ y_2$) in terms of the difference in intrinsic quality between $y_1$ and $y_2$:
\begin{equation}
p(y_1 \succ y_2 \mid x) = \sigma\left(r^\star(x, y_1) - r^\star(x, y_2)\right)
\end{equation}
where $r^\star$ is the latent reward function and $\sigma$ is the logistic sigmoid.

Reinforcement Learning with Human Feedback (RLHF) fits a reward model to pairwise comparisons and subsequently maximizes expected reward under a KL-divergence penalty of strength $\beta > 0$ relative to a reference policy $\pi_{\text{ref}}$:
\begin{equation}\label{eq:rl_objective}
\max_{\pi_\theta} \mathbb{E}_{x \sim \mathcal{D}, y \sim \pi_\theta(\cdot \mid x)} \left[ r(x, y) \right] - \beta \mathbb{D}_{\text{KL}}\left[\pi_\theta(y \mid x) \,\Vert\, \pi_{\text{ref}}(y \mid x)\right].
\end{equation}
\citet{dpo} observe that this constrained optimization problem has an exact closed-form solution 
\begin{equation}
 \pi_\theta(y \mid x) = \frac{1}{Z(x)} \pi_{\text{ref}}(y \mid x) \exp\left(\frac{1}{\beta} r(x, y)\right)
\end{equation} where $Z(x) = \sum_y \pi_{\text{ref}}(y \mid x) \exp\left(\frac{1}{\beta} r(x, y)\right)$ is the partition function. Rearranging terms yields the implicit reward:
\begin{equation}\label{eq:reward_reparam}
r(x, y) = \beta \log \frac{\pi_\theta(y \mid x)}{\pi_{\text{ref}}(y \mid x)} + \beta \log Z(x).
\end{equation}
When substituted back into the BT preference model, the partition function $Z(x)$ cancels out in the reward difference. Maximum likelihood estimation over preference pairs thus simplifies to a direct loss over the policy parameters, bypassing explicit reward modeling and reinforcement learning:
\begin{equation}\label{eq:dpo_loss}
\mathcal{L}_{\text{DPO}}(\pi_\theta; \pi_{\text{ref}}) = -\mathbb{E}_{(x, y_w, y_l) \sim \mathcal{D}} \left[ \log \sigma \left( \beta \log \frac{\pi_\theta(y_w \mid x)}{\pi_{\text{ref}}(y_w \mid x)} - \beta \log \frac{\pi_\theta(y_l \mid x)}{\pi_{\text{ref}}(y_l \mid x)} \right) \right].
\end{equation}

The BARP model \citep{barp} extends Bradley-Terry to account for evaluator bias in pairwise ranking. Item $i$ has a true quality score $s_i$ and a group attribute indicator $\delta_{g_i} \in \{0, 1\}$. Evaluator $k$ carries a scalar bias $\theta_k$ and perceives the score as $s_i + \theta_k \delta_{g_i}$, so that:
\begin{equation}
p_k(i \succ j) = \sigma\left(s_i + \theta_k \delta_{g_i} - s_j - \theta_k \delta_{g_j}\right).
\end{equation}
Scores and bias parameters are estimated jointly via maximum likelihood. The bias terms vanish for same-group comparisons ($\delta_{g_i} = \delta_{g_j}$), which isolate true item quality, whereas cross-group comparisons identify evaluator bias. We adopt this formulation as our foundation for developing BA-DPO.

\section{Bias-Adjusted DPO}
\label{sec:method}
Given a prompt $x$, an item (a response to the prompt by the language model) $y$ has a 
latent reward $r^\star(x,y)$ and a group indicator $\delta_{g(x,y)} \in \{0,1\}$ recording whether it exhibits
the attribute whose influence we wish to remove. Annotator $k \in \{1,\dots,m\}$ carries
a bias parameter $\theta_k \in \R$. Under the BARP model \citep{barp}, annotator $k$
prefers $y_1$ to $y_2$ with probability
\begin{equation}\label{eq:biased_bt}
    p_k(y_1 \succ y_2 \mid x) = \sigma\!\bigl(
        r^\star(x,y_1) + \theta_k\,\delta_{g(x,y_1)} - r^\star(x,y_2) - \theta_k\,\delta_{g(x,y_2)}\bigr),
\end{equation}
so that the annotator acts on a perceived reward
$\tilde{r}_k(x,y) = r^\star(x,y) + \theta_k\,\delta_{g(x,y)}$ rather than on $r^\star$
itself. Substituting the DPO reparameterization \eqref{eq:reward_reparam} into
\eqref{eq:biased_bt}, the two copies of $\beta\log Z(x)$ cancel exactly as they do in
standard DPO, because the bias terms are additive to the reward and independent of the
partition function. Given a dataset
$\mathcal{D} = \{(k^{(i)}, x^{(i)}, y_w^{(i)}, y_l^{(i)})\}_{i=1}^N$ in which each
comparison is attributed to its annotator; by applying maximum likelihood over $\mathcal{D}$ we obtain the \method{} loss:
\begin{equation}\label{eq:barp_dpo_loss}
    \mathcal{L}_{\text{BA-DPO}}(\pitheta, \{\theta_k\}; \piref)
    = -\E_{(k,x,y_w,y_l)\sim\mathcal{D}}\Bigl[\log\sigma\bigl(u + b_k\bigr)\Bigr],
\end{equation}
where we write
\begin{equation}\label{eq:shorthand}
    u \coloneqq \beta\log\frac{\pitheta(y_w \mid x)}{\piref(y_w \mid x)}
             - \beta\log\frac{\pitheta(y_l \mid x)}{\piref(y_l \mid x)},
    \qquad
    b_k \coloneqq \theta_k\bigl(\delta_{g(x,y_w)} - \delta_{g(x,y_l)}\bigr).
\end{equation}
We write $\Delta\delta \coloneqq \delta_{g(x,y_w)} - \delta_{g(x,y_l)} \in \{-1,0,1\}$
for the group difference of a comparison, so that $b_k = \theta_k\,\Delta\delta$. The
loss is optimised jointly over the policy parameters and the $m$ scalar bias
parameters. 

\begin{restatable}[\method{} reduces to DPO for unbiased annotators]{remark}{remdpospecial}\label{rem:dpo_special}
Setting $\theta_k = 0$ for every annotator in \eqref{eq:barp_dpo_loss} recovers the DPO loss
\eqref{eq:dpo_loss} exactly. DPO is therefore the special case of \method{} in which the
annotators are assumed unbiased, which is the assumption of the Bradley-Terry model.
\end{restatable}

\paragraph{Gradient analysis.}
The gradient of \eqref{eq:barp_dpo_loss} with respect to the policy parameters is the
DPO gradient with the importance weight $\sigma(-u)$ replaced by $\sigma(-u - b_k)$,
\begin{equation}\label{eq:grad_policy}
    \nabla_\phi \mathcal{L}_{\text{BA-DPO}} = -\beta\,\E_{(k,x,y_w,y_l)\sim\mathcal{D}}
    \Bigl[\sigma(-u - b_k)\cdot\bigl(\nabla_\phi\log\pi(y_w \mid x) - \nabla_\phi\log\pi(y_l \mid x)\bigr)\Bigr].
\end{equation}
When the annotator's bias favours the winner ($b_k > 0$) the weight shrinks and the
comparison teaches less, since part of its outcome is explained by bias; when the winner
was preferred against the bias ($b_k < 0$) the weight grows; when the two responses
share a group, $b_k = 0$ and the update is DPO's. The gradient with respect to a bias
parameter is the scalar
\begin{equation}\label{eq:grad_bias}
    \frac{\partial \mathcal{L}_{\text{BA-DPO}}}{\partial \theta_k}
    = -\frac{n_k}{N}\,\E_{(x,y_w,y_l)\sim\mathcal{D}_k}\Bigl[\sigma(-u - b_k)\cdot\bigl(\delta_{g(x,y_w)} - \delta_{g(x,y_l)}\bigr)\Bigr],
\end{equation}
where $\mathcal{D}_k$ are the comparisons provided by annotator $k$ and $n_k/N$ their
share of the data. The bias channel adds $m$ scalar parameters, a negligible cost next to the policy; the factor $n_k/N$ is decisive for the optimisation dynamics (see Proposition~\ref{prop:rate} in Appendix \ref{app:proof_rate}).

\label{sec:variants}

\paragraph{Per-annotator, pooled, and shared-mean.}
The model above requires annotator identities. When they are unavailable the bias
channel collapses to a single population-level scalar $\theta \in \R$ shared by every
comparison (the \emph{pooled} variant). When they are available there are two ways to use them: the \emph{naive per-annotator} variant with free parameters
$\{\theta_k\}$, and the \emph{shared-mean} variant
$\theta_k = \bar\theta + \varepsilon_k$ in which the mean is a single parameter updated
on every cross-group comparison and the deviations $\varepsilon_k$ carry the individual
structure. When the annotators belong to known classes, a class offset can be inserted between the two, $\theta_k = \bar\theta + \delta_c + \varepsilon_k$. All variants span the same
model class where they overlap; they differ in optimisation dynamics, and
Section~\ref{sec:mechanism} and Appendix~\ref{app:ablations} show that this difference,
not the identity information itself, decides how much bias is removed. % As controls we also train the per-annotator variant with deliberately shuffled annotator ids, and warm-start $\bar\theta$ (or the pooled $\theta$) at the corpus-level offline estimate, the log-odds that the $\delta_g = 1$ side wins a cross-group comparison. // This is about experiments 

\paragraph{Multiple attributes.}
For $q$ binary or continuous attributes with group vectors $\vec{g}_y \in \R^q$, the
bias term generalises to $b_k = \vec{\theta}_k^\top(\vec{g}_{y_w} - \vec{g}_{y_l})$ with
$\vec{\theta}_k \in \R^q$ per annotator. The Signed-UltraFeedback corpus of Section~\ref{sec:experiments}
uses $q = 2$ binary attributes, and Appendix~\ref{app:abl_variants} shows that declaring only one of the two leaves the other in the policy and removes only part of the declared one.

\section{Theoretical Characterization}
\label{sec:theory}
This section presents the theoretical characterization of \method{} (proofs in Appendix~\ref{app:proofs}).

The first result states precisely the substitution already presented in Section~\ref{sec:method}.
\begin{restatable}[Compatibility with the DPO reparameterization]{proposition}{propcompat}\label{prop:compat}
Let $r(x,y)$ be any reward function for which
$Z(x) = \sum_y \piref(y \mid x)\exp(r(x,y)/\beta)$ is finite for every prompt, and let
$\{\theta_k\}_{k=1}^m$ be arbitrary evaluator bias parameters. Then the bias-aware
preference model \eqref{eq:biased_bt} can be written equivalently as
\[
    p_k(y_1 \succ y_2 \mid x) = \sigma\!\left(\beta\log\frac{\pi(y_1 \mid x)}{\piref(y_1 \mid x)}
    - \beta\log\frac{\pi(y_2 \mid x)}{\piref(y_2 \mid x)}
    + \theta_k\bigl(\delta_{g(x,y_1)} - \delta_{g(x,y_2)}\bigr)\right)
\]
with $\pi(y \mid x) = \piref(y \mid x)\exp(r(x,y)/\beta)/Z(x)$.
\end{restatable}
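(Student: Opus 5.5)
The plan is to verify the identity directly by the same substitution used in standard DPO. The only extra observation needed is that the bias terms are additive and play no role in the definition of $\pi$ or $Z(x)$.

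First I will check that $\pi$ is well defined. Since $Z(x)$ is finite by hypothesis, and strictly positive because it is a sum of nonnegative terms with at least one positive term wherever $\piref(\cdot\mid x)$ has support, the function $\pi(y\mid x)=\piref(y\mid x)\exp(r(x,y)/\beta)/Z(x)$ is a probability distribution. It has the same support as $\piref(\cdot\mid x)$. On that support I will take logarithms and rearrange, exactly as in \eqref{eq:reward_reparam}, to obtain
\[
r(x,y)=\beta\log\frac{\pi(y\mid x)}{\piref(y\mid x)}+\beta\log Z(x).
\]

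Second, I will take $r^\star=r$ in \eqref{eq:biased_bt} and substitute this expression for both $r(x,y_1)$ and $r(x,y_2)$. The two copies of $\beta\log Z(x)$ depend only on $x$, so they cancel in the difference. The bias contribution $\theta_k\delta_{g(x,y_1)}-\theta_k\delta_{g(x,y_2)}$ is left untouched, and I will regroup it as $\theta_k(\delta_{g(x,y_1)}-\delta_{g(x,y_2)})$. The argument of $\sigma$ is then exactly the claimed one, and equality of the arguments gives equality of the probabilities. The step works because $\theta_k$ enters only the perceived reward $\tilde r_k$ and is not absorbed into $\pi$. For this reason the partition function, and hence the policy $\pi$, is common to all annotators, which is what makes the result hold for arbitrary $\{\theta_k\}$.

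The main obstacle is modest: it is the support issue. The log-ratio is undefined when $\piref(y\mid x)=0$. I would handle this by restricting the statement to responses in the support of $\piref(\cdot\mid x)$, which is implicit in DPO, or equivalently by noting that $r$ is only meaningful there. I would also note in passing that the map from $r$ to $\pi$ is not invertible without the $Z$ term, but injectivity is not needed for the equivalence as stated.
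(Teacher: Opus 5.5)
Your proposal is correct and follows the paper's proof essentially step for step: you note that $\pi$ is a normalized distribution, invert it to $r=\beta\log(\pi/\piref)+\beta\log Z(x)$, substitute into the biased Bradley--Terry logit, and let the two copies of $\beta\log Z(x)$ cancel while the additive bias terms pass through unchanged. Your restriction to the support of $\piref(\cdot\mid x)$, where the log-ratio is defined, is a small point of care that the paper leaves implicit; it does not change the argument.
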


Proposition~\ref{prop:compat} is what keeps the construction at one policy: the bias term is added to the reward and does not involve the partition function $Z(x)$, so it passes through the reparameterization unchanged and stays outside the policy. Methods for annotator disagreement differ in how much freedom they give each annotator's reward. DPO gives none: one reward is fit for everyone, so a bias the annotators share is represented nowhere except in the policy. Mixture methods give each of $K$ annotator types its own reward, and since each reward has its own optimal policy, they train $K$ policies \citep{chidambaram2024emdpo}. \method{} lets annotators differ only in the coefficient $\theta_k$ on a declared direction $\delta_g$, a fixed function of $(x,y)$ with no parameters of its own. A single policy then carries the reward the annotators share, the annotator-specific part enters the logit as the scalar $\theta_k$, and modelling annotator bias costs $m$ scalars, not $K$ policies.

We now characterise what the observed votes determine about the bias parameters. Part
of what they leave undetermined is inherited from DPO, which cannot see a reward shift
that is constant within a compared pair, because rewards enter the likelihood only
through the difference between the two responses of a pair. The rest is what the bias
channel adds, and the theorem holds for an otherwise arbitrary reward function.

Consider the graph whose vertices are the annotators, with an edge between two annotators
whenever they judged at least one cross-group comparison in common. We say that the
annotator pool is \emph{connected} if this graph is connected.

\begin{restatable}[Identifiability of the bias parameters]{theorem}{thmidentif}\label{thm:identif}
The loss $\mathcal{L}$ of \eqref{eq:barp_dpo_loss}, viewed as a function of the reward
values $r(x,y)$ and of $\theta$, is jointly convex in $(r, \theta)$,
and all its minimisers assign the same probability to every judgment. If the annotator
pool is connected and $(\hat r, \hat\theta)$ is a minimiser, the minimisers are exactly
the pairs
\[
    \bigl(\hat r + c\,\delta_g + h,\ \hat\theta - c\mathbf{1}\bigr), \qquad c \in \R,
\]
with $h$ any function constant on the two responses of every compared pair. Every
difference $\theta_k - \theta_{k'}$ therefore takes the same value at every minimiser,
while their mean $\bar\theta \coloneqq \frac{1}{m}\sum_k \theta_k$ is not determined by the judgments.
\end{restatable}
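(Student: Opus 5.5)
The plan has three stages: convexity from composition with a linear map, uniqueness of the fitted logits from strict convexity of the logistic loss, and then a linear-algebra computation of the kernel of that map.

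\textbf{Convexity.} I view the loss as a finite sum over comparisons $i$ of $\ell(z_i)$, where $\ell(z) = -\log\sigma(z) = \log(1+e^{-z})$. The logit of comparison $i$ is
\[
z_i = r(x^{(i)},y_w^{(i)}) - r(x^{(i)},y_l^{(i)}) + \theta_{k^{(i)}}\,\Delta\delta^{(i)}.
\]
This is the BT logit with $r$ in place of the implicit reward, as licensed by Proposition~\ref{prop:compat}. The map $A:(r,\theta)\mapsto (z_i)_i$ is linear in the finitely many reward values that appear and in $\theta$. Since $\ell$ is convex, $\mathcal{L} = \sum_i \ell\circ A_i$ is jointly convex.

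\textbf{Same judgment probabilities.} Here I use that $\ell$ is strictly convex, because $\ell''(z) = \sigma(z)\sigma(-z) > 0$. Suppose two minimisers gave different logit vectors $z \neq z'$. Their midpoint is also feasible, and it would achieve a strictly smaller value, a contradiction. So all minimisers share the same vector $z = A(r,\theta)$. Hence every judgment probability $\sigma(z_i)$ is the same at every minimiser, and the minimiser set is exactly $(\hat r,\hat\theta) + \ker A$.

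\textbf{Computing $\ker A$.} The remaining task, and the main obstacle, is to show that $\ker A$ is exactly $\{(c\,\delta_g + h,\,-c\mathbf 1)\}$.

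\emph{Inclusion.} Take $(\rho,\tau) = (c\,\delta_g + h,\,-c\mathbf 1)$. On comparison $i$ the reward part contributes $c\,\Delta\delta^{(i)}$, because $h$ cancels within the pair. The bias part contributes $-c\,\Delta\delta^{(i)}$. So $A(\rho,\tau) = 0$.

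\emph{Converse.} Suppose $A(\rho,\tau) = 0$, so $\rho(y_w)-\rho(y_l) = -\tau_k\,\Delta\delta$ on every comparison.
\begin{itemize}
\item On a same-group pair this says $\rho$ is equal on the two responses.
\item On a cross-group pair it says $\rho(y_{\delta=1}) - \rho(y_{\delta=0}) = -\tau_k$.
\end{itemize}
Connectivity should force all $\tau_k$ to be equal. An edge between $k$ and $k'$ means they judged a common cross-group comparison. That pair's reward difference is a single number, so $-\tau_k = -\tau_{k'}$. Propagating along paths gives $\tau = -c\mathbf 1$ for a single $c$. Then $\rho - c\,\delta_g$ has zero difference on every compared pair, same-group or cross-group, so it qualifies as an $h$.

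\textbf{Reading off the conclusions.} Every minimiser has the form $\hat\theta - c\mathbf 1$, so each difference $\theta_k - \theta_{k'}$ is invariant. The mean $\bar\theta$ shifts by $-c$ with $c$ arbitrary, so the judgments do not determine it.

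\textbf{Caveat on $h$.} The subtle point is the meaning of ``$h$ constant on the two responses of every compared pair''. If a response appears in several pairs, this forces constancy on connected components of the response–comparison graph. I would state $h$ as exactly that class, and check that it matches the statement's wording.
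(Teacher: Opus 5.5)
Your proposal is correct and follows the paper's proof essentially step for step. The steps are: affine logits composed with the convex $\log(1+e^{-t})$ give joint convexity; the midpoint argument with strictly positive second derivative shows all minimisers share their logits; and the homogeneous system is solved pair by pair, with connectedness forcing $d\theta=-c\mathbf{1}$ and leaving $dr-c\,\delta_g$ constant within every compared pair. Your $\ker A$ is exactly the paper's null system, and your caveat about $h$ needs no adjustment, since ``constant on the two responses of every compared pair'' is already precisely constancy on the connected components of the response--comparison graph.
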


Connectedness requires that every annotator judged at least one
cross-group comparison, since the votes of an annotator who never saw the attribute on
both sides say nothing about their $\theta_k$. It holds whenever annotators are assigned
to comparisons at random, as in every corpus here; if it fails, the theorem applies
within each connected group of annotators, each with its own offset. For a fixed policy,
a minimiser in $\theta$ exists unless some annotator chose the attribute side in every
one of their cross-group comparisons, in which case their $\theta_k$ diverges.

Theorem~\ref{thm:identif} is stated in terms of the individual $\theta_k$; it can be restated by decomposing the annotators' biases into a shared mean and deviations.

\begin{restatable}[Reward-bias degeneracy]{corollary}{cordegeneracy}\label{prop:degeneracy}
Write $\theta_k = \bar\theta + \varepsilon_k$ with $\sum_k \varepsilon_k = 0$. Shifting
the mean by $-c$ and adding $c\,\delta_g$ to the reward leaves every judgment
probability unchanged, so $\bar\theta$ is not determined by the votes. If the annotator
pool is connected, the deviations $\varepsilon_k$ take the same value at every
minimiser.
\end{restatable}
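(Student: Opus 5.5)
The plan is to derive Corollary~\ref{prop:degeneracy} directly from Theorem~\ref{thm:identif} by a change of coordinates. The map $\theta \mapsto (\bar\theta, \varepsilon)$, with $\bar\theta = \frac1m\sum_k\theta_k$ and $\varepsilon_k = \theta_k - \bar\theta$, is a linear bijection from $\R^m$ onto $\R \times \{\varepsilon : \sum_k \varepsilon_k = 0\}$. Its inverse is $\theta_k = \bar\theta + \varepsilon_k$. So every statement about $\theta$ can be translated into one about the pair $(\bar\theta,\varepsilon)$.

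\textbf{Invariance.} The first claim does not need connectedness, and I would check it by direct computation in the logit. Take any comparison $(k,x,y_w,y_l)$ and replace $r$ by $r + c\,\delta_g$ and $\bar\theta$ by $\bar\theta - c$, keeping $\varepsilon$ fixed. Then the reward difference changes by $+c\,\Delta\delta$ and the bias term $\theta_k\Delta\delta$ changes by $-c\,\Delta\delta$. The logit $r(x,y_w)-r(x,y_l)+\theta_k\Delta\delta$ is therefore unchanged, and so is every judgment probability. By Proposition~\ref{prop:compat} the same holds when the reward is expressed through the policy. Because this holds for every $c \in \R$, the votes, and hence the likelihood, cannot distinguish different values of $\bar\theta$. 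So $\bar\theta$ is not determined, which is the shift direction $(\hat r + c\,\delta_g, \hat\theta - c\mathbf{1})$ already exhibited in Theorem~\ref{thm:identif}.

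\textbf{Deviations are determined.} Assume the pool is connected. By Theorem~\ref{thm:identif} every minimiser has the form $(\hat r + c\,\delta_g + h, \hat\theta - c\mathbf{1})$. At such a minimiser the mean is $\bar\theta = \overline{\hat\theta} - c$, and each deviation is
\[
\varepsilon_k = (\hat\theta_k - c) - (\overline{\hat\theta} - c) = \hat\theta_k - \overline{\hat\theta},
\]
which does not depend on $c$ or $h$. Equivalently, $\varepsilon_k = \frac1m\sum_{k'}(\theta_k - \theta_{k'})$ is an average of the pairwise differences, which the theorem already shows are constant across minimisers.

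\textbf{Main obstacle.} There is no real obstacle, since all the content is in Theorem~\ref{thm:identif}. The one point needing care is that the decomposition is unique only because of the constraint $\sum_k\varepsilon_k=0$. Without that constraint, the shift $-c\mathbf{1}$ could be absorbed into $\varepsilon$ instead of $\bar\theta$, and the deviations would no longer be pinned down. I would state this constraint explicitly when invoking the bijection.
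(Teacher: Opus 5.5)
Your proposal is correct and follows the paper's proof: the invariance is checked directly on the logit (it is the $h=0$ case of Theorem~\ref{thm:identif}), and the deviations are pinned down because each $\varepsilon_k = \frac1m\sum_{k'}(\theta_k-\theta_{k'})$ is an average of pairwise differences, which the theorem fixes across minimisers. Your explicit computation via the minimiser family and your remark that uniqueness of the decomposition rests on $\sum_k\varepsilon_k=0$ are harmless additions to the same argument.
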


The degeneracy is intrinsic rather than an artefact of the parameterisation: the
likelihood cannot distinguish between the attribute genuinely carrying reward and every
annotator being uniformly biased toward it. The free constant $c$ is therefore a choice,
and what it decides is how much the trained policy favours the attribute: by
Corollary~\ref{prop:degeneracy}, moving $c$ from $\bar\theta$ into the reward multiplies
the policy's odds for a response over its counterfactual without the attribute by
$e^{c/\beta}$ and changes no judgment probability. Two anchors are natural. Anchoring to
the reference takes $c$ such that the policy gives a response and its counterfactual the
same odds as the reference does, so the shared level of the labels sits in $\bar\theta$
and the policy keeps the reference's attribute rate; every experiment in the paper starts
from $\piref$ with $\theta = 0$ and ends there (Appendix~\ref{app:anchor_reference}).
Anchoring to a target takes $c$ such that the outputs meet a stated criterion, for
instance statistical parity, an attribute rate of one half: the required $c$ is found by
a one-dimensional search on the rate and applied as a fixed offset
(Appendix~\ref{app:anchor_target}; Appendix~\ref{app:anchor_experiment} uses it to bring
a biased DPO policy to parity). Which anchor is right is a normative decision, like the
attribute partition itself; the labels do not make it.

\begin{restatable}[Representability of the bias components]{proposition}{propabsorb}\label{prop:absorb}
Decompose $\theta_k = \bar{\theta} + \varepsilon_k$ with $\sum_k \varepsilon_k = 0$. The
shared component $\bar{\theta}\,\delta_{g(x,y)}$ is expressible as a function of $(x,y)$
and can therefore be represented inside the policy's implicit reward, whereas the
deviation term $\varepsilon_k\,\delta_{g(x,y)}$ depends on $k$, which the policy never
observes, and is representable by no policy $\pi(y \mid x)$ unless every $\varepsilon_k = 0$,
that is, unless all annotators have the same bias.
\end{restatable}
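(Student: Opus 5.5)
The proof has two parts: a direct construction for the shared component, and an impossibility argument for the deviations that rests on the fact that a policy's implicit reward cannot depend on $k$.

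\textbf{Shared component.} Fix any policy $\pi$ and define $\pi'(y\mid x) \propto \pi(y\mid x)\exp\bigl(\bar\theta\,\delta_{g(x,y)}/\beta\bigr)$. Normalisation is harmless: by Proposition~\ref{prop:compat} the partition function depends only on $x$, so it cancels within every compared pair. The implicit reward of $\pi'$ is that of $\pi$ plus $\bar\theta\,\delta_{g(x,y)}$ plus a function of $x$ alone. This is exactly the move $c=\bar\theta$ in Theorem~\ref{thm:identif} and Corollary~\ref{prop:degeneracy}. Consequently every judgment probability under $(\pi',\varepsilon)$ equals that under $(\pi,\bar\theta\mathbf{1}+\varepsilon)$, and the shared part is representable.

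\textbf{Deviation term, by contradiction.} Suppose some policy $\pi$ reproduces the annotator-indexed model for every $k$. By reproduce I mean that there is an $x$-dependent term $h_k(x)$ such that
\[
\beta\log\frac{\pi(y\mid x)}{\piref(y\mid x)} = r(x,y)+\varepsilon_k\,\delta_{g(x,y)} + h_k(x)
\]
for all $k$. Equivalently, the pairwise logits agree for all $k$ on all pairs. The left side does not depend on $k$. Taking two annotators $k,k'$ and subtracting gives $(\varepsilon_k-\varepsilon_{k'})\,\delta_{g(x,y)} = h_{k'}(x)-h_k(x)$. Now evaluate this on a cross-group pair $y_1,y_2$ of the same prompt, one with $\delta_g=1$ and one with $\delta_g=0$, and subtract the two instances. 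The right side cancels, leaving $\varepsilon_k=\varepsilon_{k'}$. Hence all $\varepsilon_k$ are equal, and since $\sum_k\varepsilon_k=0$, all are zero. The converse is immediate: if every $\varepsilon_k=0$ there is nothing to represent.

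\textbf{Main obstacle.} The difficulty is making precise what ``representable by a policy'' means. I would state it as equality of the pairwise logits $\sigma^{-1}p_k(y_1\succ y_2\mid x)$ for every $k$, which is the only quantity the likelihood sees. The argument also needs at least one prompt whose support contains both a $\delta_g=1$ and a $\delta_g=0$ response. I would add this explicitly as a mild nondegeneracy assumption, mirroring the role of cross-group comparisons in Theorem~\ref{thm:identif}. Without such a prompt, the term $\varepsilon_k\delta_g$ is constant on every pair and is absorbed into $h$, so the claim would be false.
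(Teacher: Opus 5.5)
Your proof is correct and follows the paper's route. The shared part is absorbed by tilting the reward to $r + \bar\theta\,\delta_g$ (via Proposition~\ref{prop:compat}), and the deviation part is ruled out because a $k$-independent implicit reward forces all $\varepsilon_k$ to be equal, hence zero by $\sum_k \varepsilon_k = 0$. The only difference is your notion of representability: the paper demands exact equality $\varepsilon_k\,\delta_{g(x,y)} = h(x,y)$ and so needs just one response with $\delta_g = 1$, whereas you allow prompt-level shifts $h_k(x)$ and match only pairwise logits, which is what the likelihood sees. That gives a slightly stronger and more meaningful statement, at the cost of the cross-group nondegeneracy assumption you correctly flag.
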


\label{sec:mechanism}
Proposition~\ref{prop:absorb} says what the policy could express, not what training
makes it express. Had the policy taken up the shared component, the bias parameters would
be left with the deviations alone, and the naive per-annotator and shared-mean variants
would then behave alike, since they write the deviations the same way. In our experiments
the shared component goes to the bias parameter instead: a mean started at zero and a mean
started at the offline estimate converge to the same interior value
(Appendix~\ref{app:abl_multipref}). The two variants therefore differ only in how they
write the same $m$ parameters, as $m$ free $\theta_k$ or as a mean plus deviations, and
that decides how fast the mean moves. The shared parameter is updated by every
cross-group judgment, while each free $\theta_k$ is updated only by the judgments of its
own annotator. Under gradient descent the naive mean is therefore exactly $m$ times slower
(Proposition~\ref{prop:rate}).

Together, these results explain the pattern of the experiments. The deviations are the
only component the votes determine on their own (Theorem~\ref{thm:identif}) and the only
one the policy cannot express (Proposition~\ref{prop:absorb}). Debiasing the policy,
however, requires the mean, which is the component the naive per-annotator variant learns
most slowly and the shared-mean variant learns first.

\section{Experiments}
\label{sec:experiments}

In this section, we test whether DPO absorbs a bias shared by the annotators and whether
\method{} removes it, on two corpora: Signed-UltraFeedback (built from UltraFeedback
\citep{ultrafeedback}), where the annotators' biases are planted and the amount to remove is
therefore known, and MultiPref \citep{multipref}, with real annotators, under length and
under formatting. 
DPO takes up the bias in every setting and model tested, and \method{} removes most of it without moving further
from the reference than DPO does.

\subsection{Experimental settings}
\label{sec:setup}
\paragraph{Datasets.}
The method needs an attribute that can be read from each response and, for the
per-annotator variants, the identity of the annotator behind each judgment. We evaluate
four attributes on two corpora (Appendix~\ref{app:datasets} has the full details).

Length and formatting can be read from any preference corpus. We use MultiPref
\citep{multipref}, one of the few public corpora with annotator identities: after
removing ties, $30{,}847$ disaggregated judgments on $10{,}461$ comparisons, each judged
by four of $227$ annotators. A response is \emph{long} when it has at least $1.5$ times
the words of its partner and \emph{formatted} when it contains Markdown.

We know of no public preference corpus with identified annotators in which the responses
carry a gender- or race-coded marker. We therefore introduce \emph{Signed-UltraFeedback},
built from the prompts and response pairs of UltraFeedback \citep{ultrafeedback} by ending
each response with a signature whose first name comes from published audit lists
\citep{bertrand2004emily,caliskan2017semantics} and is both gender-coded (woman or man)
and race-coded (white or black), so that every response carries two binary attributes at
once. Sixty annotators in three classes of $20$ vote on
each pair, four per pair, through the biased Bradley-Terry likelihood
\eqref{eq:biased_bt}, each with a bias vector $\theta_k \sim \mathcal{N}(\mu_c, 0.8^2
I)$. The class means are planted so that both attributes have a population mean near
$1.0$ in log-odds, the classes agree on the gender-coded attribute, and on the race-coded
one a class of $20$ leans the other way. % (Appendix~\ref{app:datasets} gives the values).
The $\theta_k$ are used only to generate the labels and to score the outcome, never by a
training method. The reference policy signs with each name group equally often, so any
bias in a trained policy comes from the preference stage. We say ``gender-coded'' and
``race-coded'' because the corpus tracks how a text marker propagates through alignment,
not human prejudice.

\paragraph{Models and methods.}
We run every experiment on two models, \texttt{Qwen2.5-0.5B-Instruct} \citep{qwen25}
and \texttt{Llama-3.1-8B-Instruct} \citep{llama3}, which differ in scale and in family.
The 0.5B model is fully fine-tuned; the 8B model is trained with LoRA \citep{hu2022lora}
on all linear layers. Each model
and corpus has its own SFT reference, fine-tuned on the majority-chosen response of each
comparison and shared by every method and seed, so that implicit rewards are comparable
(Appendix~\ref{app:training}). A third model, \texttt{Mistral-7B-Instruct-v0.3}
\citep{mistral7b}, of a different family and with a different tokenizer, repeats the comparison
between DPO and \method{} as a robustness check (Appendix~\ref{app:abl_mistral}).

We compare the reference, DPO, and the two \method{} variants of
Section~\ref{sec:variants}: \emph{pooled}, one scalar per attribute shared by every
annotator, and $\bar\theta+\varepsilon_k$, a shared mean per attribute plus one
deviation per annotator. 
The remaining variants of the bias model %, the free $\theta_k$ of the original formulation, shuffled annotator identities and warm starts, 
are controls for the mechanism, reported at 0.5B in
Appendix~\ref{app:ablations}.
r3

\paragraph{Baselines.}
To the best of our knowledge, no method for preference optimisation takes the attribute
as an input that the user supplies and can change between runs. We therefore compare with
the two families that come closest: the corrections built for length bias, which fix that
attribute in the objective, and the methods built for annotator disagreement. Each runs on
the corpus of the bias it was built for and, for the length methods, also under formatting,
by a rule fixed before any result was read (Appendix~\ref{app:baselines}). R-DPO \citep{park2024disentangling} and SamPO \citep{lu2024sampo} target
length bias: they are trained on MultiPref and re-scored under formatting without
retraining. R-DPO uses $\alpha = 0.005$, because the published $0.02$ collapses the
policy on this corpus (Appendix~\ref{app:abl_multipref}). Group-DRO
\citep{sagawa2020groupdro} and EM-DPO with MinMax-DPO \citep{chidambaram2024emdpo} target
annotator disagreement. Both are our own implementations (Appendix~\ref{app:baselines}). Group-DRO runs on both corpora, with the
annotator classes as groups on Signed-UltraFeedback and the annotators on MultiPref;
EM-DPO runs on Signed-UltraFeedback and at 0.5B only, since its four rounds over three
type policies cost five times the steps of any other method.

\paragraph{Metrics.}
We evaluate each policy on $300$ held-out prompts, computing five metrics from the generations and held-out judgments (formal definitions in Appendix~\ref{app:readouts}). Each configuration is trained at three seeds, and we report the mean alongside a $95\%$ confidence interval; the reference policy is trained once and therefore has no interval.
\begin{itemize}[nosep,leftmargin=*]
    \item \emph{Attribute rate:} How often the generations carry the declared attribute. On Signed-UltraFeedback, we compute this exactly from the policy's output probabilities rather than from sampling: the probability mass placed on woman-coded ($p(\text{woman})$) or black-coded ($p(\text{black})$) names at the signature position. On MultiPref, this is the mean token count for length bias, and the markdown rate reweighted onto the reference's length distribution for formatting bias, so that a policy cannot lower the measured rate by producing shorter answers alone.
    \item \emph{Bias removed:} The proportion of the gap in attribute rate between standard DPO and the reference policy closed by the method. This is computed per seed against the corresponding DPO seed, scaled such that $0\%$ is DPO and $100\%$ is the reference.
    \item \emph{Held-out gap:} The difference in the policy's implicit reward accuracy between held-out judgment pairs that differ on the attribute, and pairs that do not. A biased policy predicts annotator labels better exactly when the attribute differs; a gap approaching zero, while same-group accuracy holds, shows that the bias has left the implicit reward.
    \item \emph{RM and RM$^-$:} The judge's score, from \texttt{Skywork-Reward-V2-Qwen3-1.7B} \citep{skywork}. \emph{RM$^-$} is the score after uniformly stripping the attribute from all generations (that is, removing signatures in Signed-UltraFeedback or markup in formatting). No length-invariant transform exists for length bias, so there we report the raw RM score and read it with caution.
    \item \emph{KL:} The per-token Kullback-Leibler divergence from the reference, estimated on the policy's own generations. This distinguishes methods that remove the bias by changing the policy from those that stay close to the reference distribution.
\end{itemize}

\subsection{Signed-UltraFeedback}
\label{sec:names_v3}
DPO absorbs the bias on both attributes and on both models (Table~\ref{tab:names_v3}), moving from the
near-balanced reference to $0.96$ to $0.99$. \method{} removes $81$ to $95\%$ of that shift, with both variants and on both
models, at no cost in judge score and at about two thirds of DPO's KL: the policy still differs
from the reference, but not along the attribute.

The two \method{} variants give the same policy to within about $0.01$
everywhere, although \emph{pooled} has one parameter per attribute and
$\bar\theta+\varepsilon_k$ has a mean and sixty deviations. Proposition~\ref{prop:absorb}
predicts this: a deviation depends on who voted, which the policy never sees, so only the
shared mean can reach it. Appendix~\ref{app:abl_variants} tests this prediction with the
annotator identities shuffled and with the annotator classes given. The race-coded attribute is removed
less than the gender-coded one at 0.5B, $81$ against $89\%$, although both were planted
at the same mean. A shared scalar is fitted to how the population votes, not to how it
is biased, and here the classes disagree, so the votes partly cancel: on the pairs that
differ only in the name the attribute-carrying side wins at log-odds $0.70$ against a
realised mean bias of $1.13$, and the learned scalar settles at $0.62$ to $0.68$. The part of
the bias the scalar does not absorb stays in the policy.

\label{sec:votes}
The deviations do not change the policy, but they do estimate each annotator's own bias.
The learned $\theta_k = \bar\theta + \varepsilon_k$ correlate with the planted $\theta_k$ at
$r = 0.95$ on the gender-coded attribute and $0.98$ on the race-coded one, on both models.
When we use them to predict each annotator's held-out votes, they are as accurate as the
planted $\theta_k$ (both as $0.752$ at 0.5B). The pooled scalar gives every annotator the
same bias, so for the class that leans against the race-coded attribute its predictions are
no better than chance (Appendix~\ref{app:votes}).

Neither baseline removes the bias. Group-DRO removes $4$ to $18\%$: it reweights the
groups it is given, and a bias that every group shares is not disagreement between
groups. EM-DPO does not recover the classes, one type ending up with $54$ to $58$ of the
$60$ annotators, and leaves the gender-coded attribute at DPO's level; its $61\%$ on the
race-coded one is averaging rather than removal, since part of the mixture sits on a
type policy carrying the opposite bias.

\begin{table}[t]
\centering
\caption{Signed-UltraFeedback, two attributes and three annotator classes. Columns as defined in Section~\ref{sec:setup}; mean and $95\%$ interval over three seeds. RM$^-$ is comparable within a model only.}
\label{tab:names_v3}
\small
\setlength{\tabcolsep}{3pt}
\resizebox{\linewidth}{!}{%
\begin{tabular}{@{}lcccccc@{}}
\toprule
\textbf{Method} & $p(\text{woman})$ & \textbf{Removed} & $p(\text{black})$ & \textbf{Removed} & \textbf{RM$^-$} & \textbf{KL} \\
\midrule
\multicolumn{7}{@{}l}{\emph{Qwen2.5-0.5B-Instruct, full fine-tuning}} \\
Reference (SFT) & 0.498 & --- & 0.517 & --- & $-3.30$ & --- \\
DPO & $0.964 \pm 0.005$ & --- & $0.991 \pm 0.001$ & --- & $-2.63 \pm 0.28$ & $0.038 \pm 0.001$ \\
\method{} (pooled) & $0.549 \pm 0.008$ & $89 \pm 2\%$ & $0.608 \pm 0.017$ & $81 \pm 4\%$ & $-2.66 \pm 0.29$ & $0.026 \pm 0.001$ \\
\method{} ($\bar\theta+\varepsilon_k$) & $0.542 \pm 0.011$ & $91 \pm 2\%$ & $0.595 \pm 0.018$ & $83 \pm 4\%$ & $-2.76 \pm 0.35$ & $0.026 \pm 0.001$ \\
Group-DRO DPO & $0.927 \pm 0.016$ & $8 \pm 3\%$ & $0.941 \pm 0.018$ & $11 \pm 4\%$ & $-2.62 \pm 0.08$ & $0.035 \pm 0.001$ \\
EM-DPO + MinMax-DPO & $0.923 \pm 0.109$ & $9 \pm 24\%$ & $0.702 \pm 0.157$ & $61 \pm 33\%$ & $-2.63 \pm 0.10$ & $0.041 \pm 0.007$ \\
\midrule
\multicolumn{7}{@{}l}{\emph{Llama-3.1-8B-Instruct, LoRA}} \\
Reference (SFT) & 0.465 & --- & 0.521 & --- & $0.41$ & --- \\
DPO & $0.967 \pm 0.012$ & --- & $0.986 \pm 0.005$ & --- & $1.43 \pm 0.17$ & $0.021 \pm 0.002$ \\
\method{} (pooled) & $0.492 \pm 0.002$ & $95 \pm {<}1\%$ & $0.555 \pm 0.002$ & $93 \pm {<}1\%$ & $1.41 \pm 0.26$ & $0.013 \pm 0.003$ \\
\method{} ($\bar\theta+\varepsilon_k$) & $0.491 \pm 0.002$ & $95 \pm {<}1\%$ & $0.552 \pm 0.002$ & $93 \pm {<}1\%$ & $1.49 \pm 0.12$ & $0.014 \pm 0.002$ \\
Group-DRO DPO & $0.945 \pm 0.018$ & $4 \pm 2\%$ & $0.904 \pm 0.020$ & $18 \pm 3\%$ & $1.44 \pm 0.25$ & $0.020 \pm 0.003$ \\
\bottomrule
\end{tabular}}
\end{table}

\subsection{MultiPref: length bias}
\label{sec:multipref}
On MultiPref the annotators are real and length has no natural target, since a longer
answer is sometimes the better one. The token count therefore says how far the policy
moved, not whether it stopped in the right place. We read the held-out gap for that: it
compares how well the policy's implicit reward predicts these annotators' labels on
held-out pairs that differ on the declared attribute with how well it predicts them on
pairs that do not.

Table~\ref{tab:multipref} reports the results. Under DPO the policy produces answers
$96$ tokens longer than the reference at 0.5B and $68$ tokens longer at 8B, with
held-out gaps of $0.13$ and $0.09$. \method{} removes $45$ to $51\%$ of that increase
and reduces the gap to $0.02$ and $0.00$ at 0.5B and to $0.03$ and $0.05$ at 8B. Its
learned scalar converges to $0.77$ to $0.81$ at 0.5B and to $0.69$ to $0.71$ at 8B,
below the offline estimate of $0.99$, which counts every win of the longer response as
bias.

Each baseline misses the bias in a different way. R-DPO removes more tokens than
\method{} does, $77$ to $89\%$ of DPO's increase, but its gap becomes negative, $-0.04$
at both scales: the policy now prefers the shorter response on the pairs that carry the
attribute, which inverts the bias rather than removing it. Under SamPO the policy is
longer than under DPO ($-38\%$ and $-7\%$), and Group-DRO changes neither the length nor
the gap, since a bias that every annotator shares is not disagreement between
annotators. At 8B the judge ranks the methods in order of their length, from $4.9$ for
the reference to $6.7$ for SamPO, so \method{}'s $6.1$ against DPO's $6.5$ cannot be
separated from the tokens removed. The held-out gap does not use the judge, which is why
we report it here.

\begin{table}[t]
\centering
\caption{MultiPref with length declared. Columns as in Section~\ref{sec:setup}; mean and $95\%$ interval over three seeds; RM is the raw judge score, which rises with answer length (Section~\ref{sec:multipref}).}
\label{tab:multipref}
\small
\setlength{\tabcolsep}{5pt}
\begin{tabular}{@{}lccccc@{}}
\toprule
\textbf{Method} & \textbf{Tokens} & \textbf{Removed} & \textbf{Gap} & \textbf{RM} & \textbf{KL} \\
\midrule
\multicolumn{6}{@{}l}{\emph{Qwen2.5-0.5B-Instruct, full fine-tuning}} \\
Reference (SFT) & 288.6 & --- & --- & $-1.34$ & --- \\
DPO & $384.5 \pm 3.7$ & --- & $0.126 \pm 0.022$ & $-0.66 \pm 0.16$ & $0.033 \pm {<}0.001$ \\
\method{} (pooled) & $341.3 \pm 8.7$ & $45 \pm 10\%$ & $0.019 \pm 0.045$ & $-0.66 \pm 0.20$ & $0.033 \pm 0.001$ \\
\method{} ($\bar\theta+\varepsilon_k$) & $335.3 \pm 7.4$ & $51 \pm 8\%$ & $0.001 \pm 0.032$ & $-0.78 \pm 0.30$ & $0.032 \pm 0.001$ \\
R-DPO ($\alpha = 0.005$) & $310.8 \pm 10.3$ & $77 \pm 10\%$ & $-0.040 \pm 0.007$ & $-0.74 \pm 0.38$ & $0.035 \pm 0.002$ \\
SamPO & $420.9 \pm 4.4$ & $-38 \pm 8\%$ & $0.088 \pm 0.023$ & $-0.76 \pm 0.07$ & $0.044 \pm 0.002$ \\
Group-DRO DPO & $389.9 \pm 4.3$ & $-6 \pm 3\%$ & $0.122 \pm 0.032$ & $-0.53 \pm 0.24$ & $0.032 \pm 0.002$ \\
\midrule
\multicolumn{6}{@{}l}{\emph{Llama-3.1-8B-Instruct, LoRA}} \\
Reference (SFT) & 309.7 & --- & --- & $4.85$ & --- \\
DPO & $378.2 \pm 4.7$ & --- & $0.095 \pm 0.008$ & $6.52 \pm 0.11$ & $0.018 \pm 0.001$ \\
\method{} (pooled) & $343.9 \pm 2.6$ & $50 \pm 4\%$ & $0.034 \pm 0.032$ & $6.10 \pm 0.02$ & $0.018 \pm {<}0.001$ \\
\method{} ($\bar\theta+\varepsilon_k$) & $346.9 \pm 6.5$ & $46 \pm 6\%$ & $0.046 \pm 0.013$ & $6.10 \pm 0.09$ & $0.018 \pm {<}0.001$ \\
R-DPO ($\alpha = 0.005$) & $317.4 \pm 2.9$ & $89 \pm 4\%$ & $-0.038 \pm 0.004$ & $5.73 \pm 0.17$ & $0.020 \pm 0.001$ \\
SamPO & $382.8 \pm 3.1$ & $-7 \pm 9\%$ & $0.061 \pm 0.024$ & $6.70 \pm 0.54$ & $0.022 \pm {<}0.001$ \\
Group-DRO DPO & $381.4 \pm 12.1$ & $-5 \pm 11\%$ & $0.097 \pm 0.016$ & $6.49 \pm 0.34$ & $0.018 \pm 0.001$ \\
\bottomrule
\end{tabular}
\end{table}

\subsection{MultiPref: formatting bias}
\label{sec:formatting}
The data is the same as in Section~\ref{sec:multipref}, with formatting declared in place
of length: the same annotators and the same judgments. The two attributes are favoured
about equally in the labels, markdown winning at log-odds $0.98$ and length at $0.99$, but
DPO amplifies only one of them. Under length it adds $96$ tokens, many times the spread
across seeds, while under formatting it raises the markdown rate by $0.035$, less than the
$\pm 0.053$ interval on that rate. The rate is therefore a weak readout here, since DPO
barely moves it, and we read the held-out gap instead.

DPO opens a gap of $0.066$ (Table~\ref{tab:formatting}). \method{} closes it to $0.004$
and $0.000$ while leaving the markdown rate where the reference had it. At 8B DPO does
not raise the rate beyond seed noise ($0.567 \pm 0.034$ against $0.560$), and still opens a
gap of $0.067$, which both \method{} variants halve. %  to $0.033$.

Neither length method can be given another attribute, and correcting length does not
correct formatting: R-DPO turns the gap negative again ($-0.039$) and SamPO leaves it near
DPO's level ($0.053$).
At its published strength, $\alpha = 0.02$, R-DPO raises the markdown rate to $0.70$,
above DPO, in answers collapsed to $138$ tokens
(Appendix~\ref{app:abl_multipref}): the pressure DPO had put on length has moved to
markdown. On two attributes of one corpus this is the substitution pattern of
\citet{lamparth2026substitution}, and it is what declaring the attribute avoids.
Appendix~\ref{app:abl_variants} shows the same pattern inside Signed-UltraFeedback when
only one of its two biased attributes is declared.

\begin{table}[t]
\centering
\caption{MultiPref with formatting declared, on the judgments and annotators of Table~\ref{tab:multipref}. Columns as in Table~\ref{tab:multipref}. No removal share is given because DPO moves the markdown rate by less than its seed interval (Section~\ref{sec:formatting}). The length baselines are their Table~\ref{tab:multipref} checkpoints scored under formatting without retraining.}
\label{tab:formatting}
\small
\setlength{\tabcolsep}{5pt}
\begin{tabular}{@{}lcccc@{}}
\toprule
\textbf{Method} & \textbf{Rate} & \textbf{Gap} & \textbf{RM$^-$} & \textbf{KL} \\
\midrule
\multicolumn{5}{@{}l}{\emph{Qwen2.5-0.5B-Instruct, full fine-tuning}} \\
Reference (SFT) & 0.583 & --- & $-1.51$ & --- \\
DPO & $0.618 \pm 0.053$ & $0.066 \pm 0.025$ & $-0.98 \pm 0.14$ & $0.033 \pm {<}0.001$ \\
\method{} (pooled) & $0.592 \pm 0.027$ & $0.004 \pm 0.013$ & $-0.89 \pm 0.12$ & $0.032 \pm 0.002$ \\
\method{} ($\bar\theta+\varepsilon_k$) & $0.581 \pm 0.034$ & $0.000 \pm 0.051$ & $-0.86 \pm 0.19$ & $0.032 \pm 0.002$ \\
R-DPO ($\alpha = 0.005$, re-scored) & $0.634 \pm 0.003$ & $-0.039 \pm 0.039$ & $-0.98 \pm 0.34$ & $0.035 \pm 0.002$ \\
SamPO (re-scored) & $0.636 \pm 0.049$ & $0.053 \pm 0.024$ & $-1.16 \pm 0.15$ & $0.044 \pm 0.002$ \\
\midrule
\multicolumn{5}{@{}l}{\emph{Llama-3.1-8B-Instruct, LoRA}} \\
Reference (SFT) & 0.560 & --- & $4.67$ & --- \\
DPO & $0.567 \pm 0.034$ & $0.067 \pm 0.016$ & $6.18 \pm 0.12$ & $0.018 \pm 0.001$ \\
\method{} (pooled) & $0.536 \pm 0.018$ & $0.033 \pm 0.033$ & $6.05 \pm 0.03$ & $0.018 \pm 0.001$ \\
\method{} ($\bar\theta+\varepsilon_k$) & $0.539 \pm 0.022$ & $0.033 \pm 0.046$ & $6.08 \pm 0.09$ & $0.018 \pm {<}0.001$ \\
R-DPO ($\alpha = 0.005$, re-scored) & $0.568 \pm 0.041$ & $-0.031 \pm 0.029$ & $5.50 \pm 0.15$ & $0.020 \pm 0.001$ \\
SamPO (re-scored) & $0.569 \pm 0.013$ & $0.040 \pm 0.007$ & $6.34 \pm 0.54$ & $0.022 \pm {<}0.001$ \\
\bottomrule
\end{tabular}
\end{table}

\section{Conclusions}
\label{sec:conclusion}

We addressed the vulnerability of preference optimization to a bias that annotators share by
introducing Bias-Adjusted DPO (\method{}), a generalization of DPO that adds one bias parameter
per annotator for each declared attribute. We proved that the objective is convex in these
parameters and that the votes fix every annotator's bias relative to the others. How much of the
shared bias counts as quality is left to one free choice, which sets the aligned policy's attribute
rate. On three models, \method{} removed $81$ to $95\%$ of the shift DPO absorbs on planted biases
and about half of the length DPO adds on MultiPref, with no higher KL to the reference than DPO and
no loss in judged quality wherever the judge can be read independently of the attribute.

\paragraph{Limitations and future work.}
\method{} removes only the bias it is told about: the attribute must be declared and computable
from the response, and an attribute that is not declared stays in the policy
(Appendix~\ref{app:abl_variants}). Because the labels do not fix the shared level of the bias,
the default anchor keeps the reference's attribute rate, including any bias the reference already
has; a target rate removes it, but the target must be chosen. That level could instead be estimated
from a small set of judgments by annotators known to be unbiased. Our evidence comes from one
reward-model judge, one corpus with real annotators whose two attributes are both surface features,
and LoRA at 7B and 8B; a corpus with annotator identities and a social attribute, such as a persona
or a language variety, is the natural next test. Since the bias term is additive to the reward, it
also applies to reward-model training and to other pairwise objectives such as IPO \citep{ipo}, and
the per-annotator estimates could flag biased annotators while a pool is still being labelled.

\subsection*{AI use statement}
In this work, we used generative AI tools (Claude Code with Fable 5.1 and Opus 5) as a coding assistant and to polish the
writing of the manuscript. We have reviewed all AI-assisted code and text, and we take full responsibility for the final
content of this work.

\subsection*{Ethics statement}
This work aims to reduce the influence of systematic annotator bias on aligned language
models. Two cautions apply. First, the partition of response attributes into bias to be
removed and quality to be preserved is a declared input, since preference data determine the biases only up to a common offset; the choice is normative and should involve the stakeholders
affected. Second, per-annotator modelling requires annotator identifiers and produces
per-person bias estimates; we recommend treating both as personal data, and we use only
datasets whose identifiers were pseudonymised by their publishers.

\subsection*{Reproducibility statement}
All models (\texttt{Qwen2.5-\allowbreak 0.5B-\allowbreak Instruct},
\texttt{Llama-\allowbreak 3.1-\allowbreak 8B-\allowbreak Instruct} and
\texttt{Mistral-\allowbreak 7B-\allowbreak Instruct-\allowbreak v0.3}, with
\texttt{Skywork-\allowbreak Reward-\allowbreak V2-\allowbreak Qwen3-\allowbreak 1.7B} as
the judge) and datasets (\texttt{ultrafeedback\_binarized} and
\texttt{allenai/multipref}) are public. Every reported number derives from a logged run keyed by
(method, parameters, seed); every cell of the main tables is a mean over seeds 42, 123
and 456 with a 95\% interval, and single-seed appendix rows are labelled as
controls. Complete data-generation procedures and
hyperparameters are in Appendices~\ref{app:multipref} to~\ref{app:training}. All runs
used a single NVIDIA A100 80GB GPU; the peak memory and duration of each kind of run are
given in Appendix~\ref{app:training}. The source code, with the configuration of every experiment and the commands that
reproduce each table, is available at \url{https://github.com/Ambress92/BA-DPO}.

\bibliographystyle{iclr2027_conference}
\bibliography{references}

\appendix
\addtocontents{toc}{\protect\setcounter{tocdepth}{2}}
\renewcommand{\contentsname}{Appendix contents}
\tableofcontents
\section{Related Work}
\label{sec:related}

\paragraph{Length bias in RLHF and DPO.}
A substantial line of work identifies the tendency of preference-trained models to
exploit response length as a proxy for quality. \citet{park2024disentangling} study
length exploitation in DPO specifically and propose a regularisation strategy,
\citet{lu2024sampo} eliminate length reliance through down-sampled KL divergence, and
\citet{liu2024ld} desensitise DPO to length by decoupling explicit length preference
from implicit quality signals. \citet{wang2024removing} show that length is not the only
such bias: reward models also favour the response format that the prompt template
invites, and correcting length alone leaves that preference in place. On the reward-model side, \citet{zhao2025fimi} learn and correct length-bias patterns before preference optimisation, while \citet{shen2023loose} and \citet{chen2024odin} give the reward model a separate head for length and discard it when the policy is trained. All of them build length into the objective; \method{} treats length as one instance of a per-annotator additive bias on an attribute the user declares, and our experiments show that correcting length leaves a second attribute in place.

\paragraph{Format and other spurious features.}
Beyond length, \citet{zhang2024format} show that format features such as lists, markdown
and emojis bias alignment in the same way, and \citet{wang2025causal} propose causal
reward models addressing several bias types jointly. \citet{ng2025debiasing} learn a representation that
separates the factors the reward should depend on from the spurious ones, with
identifiability guarantees. These operate at the reward-model level, whereas \method{} corrects at the preference-likelihood level and so remains compatible with the reward-model-free setting DPO defines. \citet{biasdpo} address a different problem under a similar name: they use DPO to make a model produce less biased text, with a curated dataset in which the less biased of two completions is always the preferred one. There the bias is in the model's outputs and the labels are the correction; here the bias is in the annotators' labels and the outputs are what it corrupts. Their preferred completions carry the quality signal to be learned, whereas our declared attribute carries the bias to be removed.

\paragraph{Bias-only models.}
Training a separate term on a declared biased feature alongside the main model and
dropping it at test time is the standard way to keep a classifier from relying on a known
dataset bias \citep{clark2019ensemble, he2019unlearn, mahabadi2020end}. \method{} is that
construction inside preference optimisation: the bias term is additive in the reward, so it
survives the DPO reparameterisation and needs no second network, and its coefficient is per
annotator, which those methods have no analogue for because their corpora carry no
annotator identity.

\paragraph{Annotator heterogeneity.}
Standard RLHF treats all annotators as sampling from one latent reward.
\citet{xiao2024preference} analyse the algorithmic bias this induces and show that RLHF
can exhibit preference collapse under annotator disagreement. In the taxonomy of
\citet{liu2025survey}, \method{} belongs to the data-quality heterogeneity branch,
alongside methods that model disagreement as a mixture over latent preference types
\citep{chidambaram2024emdpo} or as preference dispersion \citep{mallowspo};
\citet{chhan2024crowd} use annotator identifiers to infer per-user reliability in preference-based reward learning, though not within DPO. ADPO \citep{adpo} anchors the policy's logits to the reference so that a response's prior popularity is separated from its quality; the offset it removes is common to every response and carries no annotator identity, so it does not address a preference for an attribute that only some responses carry. Those methods represent or balance disagreement, whereas \method{} models it as an additive bias on a declared attribute, which is what makes removal rather than accommodation possible, and separates what its parameters do: the mean removes the bias and the deviations describe the annotators.

\section{Proofs}
\label{app:proofs}

Throughout, judgments are indexed by $j = 1, \dots, N$; judgment $j$ was made by
annotator $k(j)$ on prompt $x_j$, with winner $y_{w,j}$ and loser $y_{l,j}$, and
$\Delta\delta_j \coloneqq \delta_{g(x_j,y_{w,j})} - \delta_{g(x_j,y_{l,j})} \in
\{-1, 0, 1\}$. We write
\[
    \phi(t) \coloneqq -\log\sigma(t) = \log\bigl(1 + e^{-t}\bigr),
    \qquad
    \phi'(t) = -\sigma(-t),
    \qquad
    \phi''(t) = \sigma(t)\,\sigma(-t) > 0,
\]
so that the negative log-likelihood of any model whose logit on judgment $j$ is $L_j$
equals $\sum_j \phi(L_j)$.

\subsection{Proof of Proposition~\ref{prop:compat}}

\propcompat

Since $Z(x)$ is finite and positive, $\pi(y \mid x) = \piref(y \mid x)\exp(r(x,y)/\beta)
/ Z(x)$ is a probability distribution over responses for every prompt. Taking logarithms
and solving for the reward,
\[
    r(x,y) = \beta\log\frac{\pi(y \mid x)}{\piref(y \mid x)} + \beta\log Z(x),
\]
which is the reparameterisation of Equation~5 of \citet{dpo}.

Now substitute this expression for $r$ into the logit of the biased Bradley-Terry model
\eqref{eq:biased_bt}:
\begin{align*}
    &r(x,y_1) + \theta_k\,\delta_{g(x,y_1)} - r(x,y_2) - \theta_k\,\delta_{g(x,y_2)} \\
    &\quad = \beta\log\frac{\pi(y_1 \mid x)}{\piref(y_1 \mid x)} + \beta\log Z(x)
      - \beta\log\frac{\pi(y_2 \mid x)}{\piref(y_2 \mid x)} - \beta\log Z(x)
      + \theta_k\bigl(\delta_{g(x,y_1)} - \delta_{g(x,y_2)}\bigr) \\
    &\quad = \beta\log\frac{\pi(y_1 \mid x)}{\piref(y_1 \mid x)}
      - \beta\log\frac{\pi(y_2 \mid x)}{\piref(y_2 \mid x)}
      + \theta_k\bigl(\delta_{g(x,y_1)} - \delta_{g(x,y_2)}\bigr).
\end{align*}
The two copies of $\beta\log Z(x)$ cancel because both responses share the prompt, and
the bias terms pass through untouched because they are added to the reward and do not
involve $Z(x)$. Applying $\sigma$ to the last line gives the stated form. A bias that
entered multiplicatively, or through the partition function, would not cancel in this
way. \qed

\subsection{Proof of Theorem~\ref{thm:identif}}

\thmidentif

We first write the logit. For judgment $j$ the logit of the bias-aware model is
\[
    L_j(r, \theta) = r(x_j, y_{w,j}) - r(x_j, y_{l,j}) + \theta_{k(j)}\,\Delta\delta_j .
\]
As a function of the reward values and the bias vector together, $L_j$ is a sum of
parameters with coefficients $+1$, $-1$ and $\Delta\delta_j$, hence affine. For a fixed
policy the reward difference is a constant $u_j$ and $L_j = u_j + \theta_{k(j)}
\Delta\delta_j$ is affine in $\theta$ alone. The negative log-likelihood is
\[
    \mathcal{L} = \sum_{j=1}^N \phi\bigl(L_j\bigr).
\]

Convexity follows. $\phi$ is convex because $\phi'' > 0$. A convex function composed with an affine map is
convex, and a sum of convex functions is convex. Hence $\mathcal{L}$ is convex in
$(r, \theta)$ jointly for an arbitrary reward, and in particular convex in $\theta$ for
a fixed policy.

All minimisers give the same probabilities. Let $(r, \theta)$ and $(r', \theta')$ both
minimise $\mathcal{L}$, with logits $L_j$ and $L_j'$. Their midpoint has logits
$\tfrac12(L_j + L_j')$, because $L_j$ is affine, and by convexity it minimises
$\mathcal{L}$ as well. Since $\phi$ is strictly convex ($\phi'' > 0$),
\[
    \sum_j \phi\Bigl(\tfrac{L_j + L_j'}{2}\Bigr)
    \;\le\; \tfrac12 \sum_j \phi(L_j) + \tfrac12 \sum_j \phi(L_j'),
\]
with equality if and only if $L_j = L_j'$ for every $j$. All three quantities equal the
minimum of $\mathcal{L}$, so equality holds, the logits agree, and so do the
probabilities $\sigma(L_j)$.

For the strictness of the dependence on a single $\theta_k$, take the direction that moves only $\theta_k$. Since
$\partial L_j / \partial\theta_k = \Delta\delta_j$ when $k(j) = k$ and $0$ otherwise,
the second derivative of $\mathcal{L}$ along this direction is
\[
    \frac{\partial^2 \mathcal{L}}{\partial\theta_k^2}
    = \sum_{j:\,k(j) = k} \phi''(L_j)\,\Delta\delta_j^2 .
\]
Every term is non-negative, and a term is positive exactly when $\Delta\delta_j \neq
0$, that is, when judgment $j$ is a cross-group comparison. So the second derivative is
positive at every point as soon as annotator $k$ has one cross-group judgment, which is
strict convexity in $\theta_k$.

For the characterisation of the minimisers, let $(r, \theta)$ and $(r', \theta')$ be two parameter vectors and write
$dr \coloneqq r' - r$ and $d\theta \coloneqq \theta' - \theta$ for their difference.
The probability assigned to judgment $j$ is $\sigma(L_j)$, and $\sigma$ is strictly
increasing, so the two vectors assign the same probability to every judgment if and
only if $L_j(r', \theta') = L_j(r, \theta)$ for every $j$. Because $L_j$ is affine, this
is the linear system
\begin{equation}\label{eq:null_system}
    dr(x_j, y_{w,j}) - dr(x_j, y_{l,j}) + d\theta_{k(j)}\,\Delta\delta_j = 0
    \qquad \text{for all } j = 1, \dots, N .
\end{equation}
It remains to show that the solutions of \eqref{eq:null_system} are
exactly the pairs $(dr, d\theta)$ with $d\theta = -c\mathbf{1}$ and $dr - c\,\delta_g$
constant on the two responses of every compared pair, for some $c \in \R$.

We solve the system one compared pair at a time. Fix a compared pair $(y_1, y_2)$ on prompt $x$ and let $K$ be the set of annotators
who judged it. If the pair is same-group, then $\Delta\delta_j = 0$ for each of these
judgments and \eqref{eq:null_system} reads
\[
    dr(x, y_1) = dr(x, y_2).
\]
If the pair is cross-group, then for each annotator $k \in K$, whichever response that
annotator chose, \eqref{eq:null_system} reads
\begin{equation}\label{eq:cross_pair}
    dr(x, y_1) - dr(x, y_2) = -\,d\theta_k\,\bigl(\delta_{g(x,y_1)} - \delta_{g(x,y_2)}\bigr).
\end{equation}
(If the annotator chose $y_2$, both sides of \eqref{eq:null_system} are the negatives
of those in \eqref{eq:cross_pair}, so the equation is the same.) The left side of
\eqref{eq:cross_pair} does not depend on $k$, and the bracket on the right is $\pm 1$,
so $d\theta_k$ takes the same value for every $k \in K$: annotators who judged the same
cross-group pair have the same $d\theta_k$.

Connectedness spreads this common value. Two annotators joined by an edge of the graph,
that is, who judged a cross-group pair in common, share the same $d\theta$. When the pool is
connected, any two annotators are joined by a chain of such edges, so the value is the
same along the chain and hence for the whole pool: there is a single constant $c$ with
\[
    d\theta_k = -c \quad \text{for every } k, \qquad \text{that is,} \qquad
    d\theta = -c\mathbf{1}.
\]

Next the reward part. Substitute $d\theta_k = -c$ back into \eqref{eq:cross_pair}: every cross-group pair
satisfies
\[
    dr(x, y_1) - dr(x, y_2) = c\,\bigl(\delta_{g(x,y_1)} - \delta_{g(x,y_2)}\bigr),
    \qquad \text{that is,} \qquad
    \bigl(dr - c\,\delta_g\bigr)(x, y_1) = \bigl(dr - c\,\delta_g\bigr)(x, y_2).
\]
Every same-group pair satisfies the same equality, because there $dr(x, y_1) = dr(x,
y_2)$ and $\delta_{g(x,y_1)} = \delta_{g(x,y_2)}$. So $dr - c\,\delta_g$ is constant on
the two responses of every compared pair. This proves that every solution of
\eqref{eq:null_system} has the stated form.

Conversely, take any $c$, any $d\theta = -c\mathbf{1}$, and any $dr$ such that $dr -
c\,\delta_g$ is constant within every compared pair. For judgment $j$,
\[
    dr(x_j, y_{w,j}) - dr(x_j, y_{l,j})
    = c\,\bigl(\delta_{g(x_j,y_{w,j})} - \delta_{g(x_j,y_{l,j})}\bigr)
    = c\,\Delta\delta_j
    = -\,d\theta_{k(j)}\,\Delta\delta_j ,
\]
so \eqref{eq:null_system} holds, and the solutions are exactly as claimed.

It remains to put the two halves together. Any two minimisers have the same logits, so
their difference solves \eqref{eq:null_system}, and conversely every solution leaves all
logits and hence $\mathcal{L}$ unchanged. The minimisers are therefore exactly the stated
family. Any two of them have $d\theta = -c\mathbf{1}$, so $d\theta_k - d\theta_{k'} = 0$:
they agree on every difference $\theta_k - \theta_{k'}$, while the common level of $\theta$
shifts by $c$. \qed

\subsection{Proof of Corollary~\ref{prop:degeneracy}}

\cordegeneracy

The transformation $r \mapsto r + c\,\delta_g$, $\theta \mapsto \theta - c\mathbf{1}$ is
the case of Theorem~\ref{thm:identif} in which the within-pair-constant part of the
reward shift is zero, so the invariance follows from the theorem. The direct check is
short. The logit for a comparison $(y_w, y_l)$ by annotator $k$ is
\[
    L = r(x, y_w) - r(x, y_l) + \theta_k\bigl(\delta_{g(x,y_w)} - \delta_{g(x,y_l)}\bigr).
\]
After the transformation it becomes
\begin{align*}
    L' &= \bigl(r(x, y_w) + c\,\delta_{g(x,y_w)}\bigr) - \bigl(r(x, y_l) + c\,\delta_{g(x,y_l)}\bigr)
        + (\theta_k - c)\bigl(\delta_{g(x,y_w)} - \delta_{g(x,y_l)}\bigr) \\
       &= r(x, y_w) - r(x, y_l)
        + c\bigl(\delta_{g(x,y_w)} - \delta_{g(x,y_l)}\bigr)
        - c\bigl(\delta_{g(x,y_w)} - \delta_{g(x,y_l)}\bigr)
        + \theta_k\bigl(\delta_{g(x,y_w)} - \delta_{g(x,y_l)}\bigr) \\
       &= L .
\end{align*}
Every logit, and so every likelihood term, is unchanged, so the data cannot distinguish
$(r, \bar\theta)$ from $(r + c\,\delta_g, \bar\theta - c)$.

For the deviations, Theorem~\ref{thm:identif} shows that every difference
$\theta_k - \theta_{k'}$ takes the same value at every minimiser. Each deviation is an
average of such differences,
\[
    \theta_k - \bar\theta = \frac{1}{m}\sum_{k'=1}^m \bigl(\theta_k - \theta_{k'}\bigr),
\]
so each $\varepsilon_k = \theta_k - \bar\theta$ does too. \qed

\subsection{Proof of Proposition~\ref{prop:absorb}}

\propabsorb

Write $\theta_k = \bar{\theta} + \varepsilon_k$ with $\sum_k \varepsilon_k = 0$. The
bias contribution to the logit of a comparison splits into two terms,
\[
    \theta_k\bigl(\delta_{g(x,y_w)} - \delta_{g(x,y_l)}\bigr)
    = \underbrace{\bar{\theta}\bigl(\delta_{g(x,y_w)} - \delta_{g(x,y_l)}\bigr)}_{\text{does not depend on } k}
    + \underbrace{\varepsilon_k\bigl(\delta_{g(x,y_w)} - \delta_{g(x,y_l)}\bigr)}_{\text{depends on } k}.
\]

The first term can be moved into the reward. Define
\[
    \tilde{r}(x,y) \coloneqq r(x,y) + \bar{\theta}\,\delta_{g(x,y)} .
\]
Since $\delta_{g(x,y)}$ is a function of the prompt and the response only, $\tilde r$
is a reward function on $(x,y)$ like any other, and the reward difference of $\tilde r$
on a comparison equals the reward difference of $r$ plus the first term. By
Proposition~\ref{prop:compat}, $\tilde r$ admits a policy representation, so the shared
component of the bias can be carried by the policy's implicit reward.

The second term cannot. Suppose there were a function $h(x,y)$ with
\[
    \varepsilon_k\,\delta_{g(x,y)} = h(x,y) \qquad \text{for all } k \text{ and all } (x,y).
\]
Pick any $(x,y)$ with $\delta_{g(x,y)} = 1$; one exists whenever the attribute occurs
in the corpus. Then $\varepsilon_k = h(x,y)$ for every $k$, so all deviations are equal
to one number, and since they sum to zero that number is $0$. Hence the second term is
a function of $(x,y)$ only when every $\varepsilon_k = 0$, that is, when all annotators
have the same bias. Otherwise no policy $\pitheta(y \mid x)$, whose implicit reward is
a function of $(x,y)$, can represent it. \qed

\subsection{The speed of the mean under the two parameterisations}
\label{app:proof_rate}

\begin{restatable}[The mean learns $m$ times more slowly]{proposition}{proprate}\label{prop:rate}
Assume annotators contribute equal data shares and gradient descent with a common
learning rate on the bias parameters. Under the shared-mean parameterisation $\theta_k =
\bar\theta + \varepsilon_k$, one step moves the shared parameter $\bar\theta$ exactly $m$
times further than the same step moves the mean of the $\theta_k$ under the naive
per-annotator model, and moves the deviations from the mean identically.
\end{restatable}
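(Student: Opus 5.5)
The plan is to compute a single gradient step under each parameterisation and compare the two updates term by term. Write $g_k \coloneqq \partial\mathcal{L}_{\text{BA-DPO}}/\partial\theta_k$, evaluated at the current point, which is the scalar of \eqref{eq:grad_bias}, and set $\bar g \coloneqq \frac{1}{m}\sum_k g_k$. Both parameterisations describe the same vector $\theta$, and so the same loss. Each update can therefore be written in terms of the $g_k$ by the chain rule alone, and no property of $\mathcal{L}$ beyond differentiability is needed.

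First, the naive model. The update is $\theta_k \leftarrow \theta_k - \eta g_k$. Averaging over $k$, the mean $\frac1m\sum_k\theta_k$ moves by $-\eta\bar g$. The deviation $\theta_k - \frac1m\sum_{k'}\theta_{k'}$ moves by $-\eta(g_k - \bar g)$.

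Second, the shared-mean model $\theta_k = \bar\theta + \varepsilon_k$. Since $\partial\theta_k/\partial\bar\theta = 1$ for every $k$, the chain rule gives $\partial\mathcal{L}/\partial\bar\theta = \sum_k g_k = m\bar g$. The step on $\bar\theta$ is therefore $-\eta m\bar g$, which is exactly $m$ times the naive move of the mean. Under equal data shares each $g_k$ carries the same factor $n_k/N = 1/m$, so this comparison is between like quantities, with no reweighting hidden in the $g_k$.

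For the deviations, $\partial\theta_k/\partial\varepsilon_{k'} = \mathbf{1}[k=k']$, so the raw gradient in $\varepsilon_k$ is $g_k$. The constraint $\sum_k\varepsilon_k = 0$ must then be maintained. I will take the step as the gradient projected onto the sum-zero subspace, which is the same as parameterising $\varepsilon$ by coordinates of that subspace. The projection subtracts the average, so $\varepsilon_k$ moves by $-\eta(g_k - \bar g)$. This matches the naive deviation update exactly, and it keeps $\bar\theta$ equal to the mean of the $\theta_k$, so the comparison of means is legitimate.

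The main obstacle is the treatment of this constraint. If the $\varepsilon_k$ were left unconstrained, the mean of the $\theta_k$ would move by $-\eta(m+1)\bar g$ rather than $-\eta m\bar g$. In that case the statement has to be read as concerning the parameter $\bar\theta$ alone, while the deviations would still move identically modulo their common drift. So I will state the projected (sum-zero) convention explicitly, as implied by the decomposition in Corollary~\ref{prop:degeneracy}, and remark briefly on the unconstrained case.
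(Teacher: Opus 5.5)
Your proof is correct, and for $\bar\theta$ it is the paper's computation. By the chain rule $\partial\mathcal{L}/\partial\bar\theta = \sum_k g_k = m\bar g$, so $\bar\theta$ moves by $-\eta m\bar g$, against $-\eta\bar g$ for the naive mean.

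The difference is in the deviations. You enforce $\sum_k\varepsilon_k = 0$ by projecting the gradient. The paper deliberately does not, because in training that constraint is only a decomposition and is never imposed. It lets each $\varepsilon_k$ move by $-\eta g_k$ and reads ``deviations from the mean'' as $\theta_k - \frac{1}{m}\sum_{k'}\theta_{k'} = \varepsilon_k - \frac{1}{m}\sum_{k'}\varepsilon_{k'}$. That quantity moves by exactly $-\eta(g_k - \bar g)$, as in the naive model.

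So the unconstrained case you relegate to a closing remark is the paper's proof, and there the statement holds in full rather than ``modulo drift''. The proposition concerns the parameter $\bar\theta$. The $(m+1)$-fold motion of the mean of the $\theta_k$ that you found is exactly the paper's remark that $\frac{1}{m}\sum_k\varepsilon_k$ drifts at the naive rate $-\eta\bar g$.

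Your convention buys a tidier comparison: the shared-mean model becomes an exact reparameterisation of the same $m$ degrees of freedom, and $\bar\theta$ is literally the mean of the $\theta_k$. The paper's convention buys fidelity to the over-parameterised model that is actually trained, which is what the proposition is invoked to explain.

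Two small points. First, the projected step coincides with plain gradient descent in coordinates of the sum-zero subspace only when those coordinates are orthonormal; eliminating $\varepsilon_m = -\sum_{k<m}\varepsilon_k$, for instance, gives a different step. Second, in your formulation the equal-share hypothesis is never used, since the factor $m$ and the equality of the deviation steps hold for arbitrary $g_k$. The paper needs it only to write each $\partial\mathcal{L}/\partial\theta_k$ as $1/m$ times a per-annotator expectation.
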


By \eqref{eq:grad_bias}, the gradient of the loss with respect to $\theta_k$ is
\[
    \frac{\partial \mathcal{L}}{\partial \theta_k}
    = \frac{n_k}{N}\,G_k ,
    \qquad
    G_k \coloneqq -\E_{(x,y_w,y_l)\sim\mathcal{D}_k}\Bigl[\sigma(-u-b_k)\,\Delta\delta\Bigr],
\]
and with equal shares $n_k = N/m$ this is $G_k / m$.

In the naive per-annotator model each $\theta_k$ is its own parameter, so one
gradient-descent step with learning rate $\eta$ moves it by $-\eta\,G_k/m$, and the
mean of the parameters moves by
\[
    \Delta\bar\theta
    = \frac{1}{m}\sum_{k=1}^m \Bigl(-\eta\,\frac{G_k}{m}\Bigr)
    = -\frac{\eta}{m}\,\overline{G},
    \qquad
    \overline{G} \coloneqq \frac{1}{m}\sum_{k=1}^m G_k .
\]

In the shared-mean model $\theta_k = \bar\theta + \varepsilon_k$, the parameter
$\bar\theta$ enters every $\theta_k$ with coefficient $1$, so by the chain rule
\[
    \frac{\partial \mathcal{L}}{\partial \bar\theta}
    = \sum_{k=1}^m \frac{\partial \mathcal{L}}{\partial \theta_k}
    = \sum_{k=1}^m \frac{G_k}{m}
    = \overline{G},
\]
and one step moves $\bar\theta$ by $-\eta\,\overline{G}$, which is $m$ times the
motion of the naive mean.

For the deviations, the gradient on $\varepsilon_k$ equals the gradient on $\theta_k$,
namely $G_k/m$, because $\varepsilon_k$ enters only $\theta_k$ and with coefficient
$1$. In the naive model the deviation of $\theta_k$ from the mean of the $\theta$'s
therefore moves by
\[
    -\eta\,\frac{G_k}{m} - \Delta\bar\theta = -\frac{\eta}{m}\bigl(G_k - \overline{G}\bigr).
\]
In the shared-mean model, $\theta_k$ moves by $-\eta\,\overline{G} - \eta\,G_k/m$ and
the mean of the $\theta$'s by $-\eta\,\overline{G} - \eta\,\overline{G}/m$, so the
deviation moves by the same amount, $-\frac{\eta}{m}(G_k - \overline{G})$. The
constraint $\sum_k \varepsilon_k = 0$ of Proposition~\ref{prop:absorb} is a
decomposition, not a constraint enforced in training; the mean of the $\varepsilon_k$
drifts by $-\eta\,\overline{G}/m$ per step, the same $m$-times smaller motion as the
naive mean. \qed

\paragraph{A rough estimate under Adam.}
The bias parameters are trained with Adam, which divides each coordinate's step by a
running estimate of the magnitude of its gradient. Consider a coordinate whose gradient
in a step equals $g \neq 0$ with probability $p$ and $0$ otherwise, independently across
steps, and Adam with learning rate $\eta$, moment parameters $\beta_1, \beta_2$ and a
negligible stabiliser $\epsilon$. The first-moment estimate $m_t$ is an exponential
average of past gradients, so at stationarity $\E[m_t] = p\,g$. The second-moment
estimate $v_t$ averages past squared gradients over a window of about $1/(1-\beta_2)$
steps, which for the default $\beta_2 = 0.999$ makes it close to its mean $p\,g^2$. The
expected step is therefore
\[
    -\eta\,\frac{\E[m_t]}{\sqrt{p\,g^2}} = -\eta\,\sqrt{p}\,\operatorname{sign}(g),
\]
against $-\eta\,p\,g$ under gradient descent: a coordinate updated in a fraction $p$ of
the steps moves at a rate proportional to $\sqrt{p}$ rather than $p$. In a step of $B$
judgments of which $b$ are cross-group, a free $\theta_k$ receives a gradient only from
its own annotator's cross-group judgments, so $p_k \approx b/m$ when the judgments are
spread evenly over the $m$ annotators, whereas $\bar\theta$ receives a gradient whenever
the step contains any cross-group judgment, so $p \approx 1$. The shared parameter
therefore moves about $\sqrt{m/b}$ times further per step than each free $\theta_k$, and
than their mean, in place of the factor $m$ of Proposition~\ref{prop:rate}. In our runs
$B = 32$; on MultiPref, with $227$ annotators and $59\%$ of the judgments cross-group,
the factor is about $3.5$ against $227$ under gradient descent. This is an estimate, not a
proven result: it takes the gradient as constant when present and does not model the
policy's own motion, so it orders the two parameterisations and does not predict where the
naive mean stops. The measured runs answer that: the free mean ends at $0.50$ of $\bar\theta$
under formatting and at $0.61$ under length (Appendix~\ref{app:abl_multipref}).

\section{The common offset and the choice of anchor}
\label{app:anchor}

Theorem~\ref{thm:identif} and Corollary~\ref{prop:degeneracy} state that the labels determine the annotators' biases up to a common offset. This appendix says what that offset is in terms of the policy, why the
reference policy fixes it in every experiment of the paper, and how a requirement on the
outputs can fix it instead. Nothing here adds an assumption: the results are those of
Section~\ref{sec:theory}, read in a coordinate in which the offset is visible.
We write the argument for one binary attribute; with several attributes it applies to each
coordinate of $\theta_k$ separately, and the swap pairs of Signed-UltraFeedback, which flip
exactly one attribute, are the pairs it refers to.

\subsection{The policy's tilt toward the attribute}
\label{app:anchor_tilt}

By Proposition~\ref{prop:compat}, the policy enters the likelihood only through its implicit
reward
\[
    r(x,y) \coloneqq \beta \log \frac{\pi(y \mid x)}{\piref(y \mid x)},
\]
a score the policy assigns to every response relative to the reference: positive where the
policy has made the response more likely than the reference did, negative where less. The
logit of a judgment by annotator $k$ is $r(x,y_w) - r(x,y_l) + \theta_k\,\Delta\delta$ with
$\Delta\delta = \delta_{g(x,y_w)} - \delta_{g(x,y_l)}$.

For a response $y$ that carries the attribute, $\delta_{g(x,y)} = 1$, let $\bar y$ denote its
counterfactual: the same response with the attribute removed or exchanged, so that
$\delta_{g(x,\bar y)} = 0$. The \emph{tilt} of the policy toward the attribute at $(x,y)$ is
the extra score it gives to the version that carries the attribute, beyond what the
reference gives it:
\[
    a(x,y) \coloneqq r(x,y) - r(x,\bar y)
    = \beta\left[\log \frac{\pi(y \mid x)}{\pi(\bar y \mid x)}
      - \log \frac{\piref(y \mid x)}{\piref(\bar y \mid x)}\right].
\]
Every score then splits into the score of the attribute-free version and the tilt,
\[
    r(x,y) = q(x,y) + a(x,y)\,\delta_{g(x,y)},
\]
where $q(x,y)$ is $r(x,\bar y)$ when $y$ carries the attribute and $r(x,y)$ otherwise. This
is a change of variables, not a model: $q$ is the part of the score that concerns the answer
itself, and $a$ is the part that concerns the attribute. The tilt is a function of $(x,y)$,
which the policy can represent (Proposition~\ref{prop:absorb}); the case to keep in mind is
the one in which it takes a single value $a$ for every response, since that is the part of
the tilt on which the degeneracy acts.

The two kinds of comparison in the data now read differently. On a pair whose two responses
differ only in the attribute, $q$ cancels and the logit of annotator $k$ is
\[
    a + \theta_k,
\]
the policy's tilt plus the annotator's bias and nothing else. On a pair whose two responses
carry the same attribute value, the tilt cancels and the logit is $q(x,y_w) - q(x,y_l)$,
quality alone. Mixed pairs combine the two.

\subsection{What the labels determine}
\label{app:anchor_identified}

The transformation of Corollary~\ref{prop:degeneracy}, $r \mapsto r + c\,\delta_g$ and
$\theta_k \mapsto \theta_k - c$ for every $k$, adds $c$ to the tilt of every response and
subtracts $c$ from every bias, and every logit is unchanged. In the coordinates above,
Theorem~\ref{thm:identif} says three things.
\begin{enumerate}[nosep,leftmargin=*]
\item The quality part $q$ is determined up to the shifts that standard DPO cannot see
either, a constant within each compared pair.
\item For every annotator the sum $a + \theta_k$ is determined. Hence every difference
$\theta_k - \theta_{k'}$ is determined, and in the parameterisation
$\theta_k = \bar\theta + \varepsilon_k$ every deviation $\varepsilon_k$ is determined.
\item The split of $a + \bar\theta$ into $a$ and $\bar\theta$ is not determined.
\end{enumerate}
The common offset is therefore not a parameter but a direction: the line
$a + \bar\theta = \text{const}$ in the $(a, \bar\theta)$ plane, along which the likelihood
is flat. Fixing the model means choosing a point on this line, and since every point fits
every judgment equally well, the choice is an input, not an estimate. The statistical
content is the one stated after Corollary~\ref{prop:degeneracy}: a bias shared by every
annotator cannot be told apart from the attribute genuinely carrying reward. Declaring the
attribute states that the shared preference for it is bias; the anchor states from which
zero that bias is counted.

\subsection{Reference anchoring}
\label{app:anchor_reference}

Every run in this paper starts from the reference, $\pi = \piref$, so at initialisation
$r \equiv 0$, $a = 0$ and $\theta = 0$. Training moves $a + \bar\theta$ to the value the
labels demand. Which of the two moves is decided by the dynamics, not by the objective: the
scalar has a convex path to its conditional optimum (Theorem~\ref{thm:identif}) and its
own learning rate, whereas the policy moves through a non-convex landscape under the KL
anchoring, so the level goes into $\bar\theta$ and $a$ stays near zero
(Section~\ref{sec:mechanism}; Proposition~\ref{prop:rate} explains why the naive
per-annotator model gets there $m$ times more slowly). The learned $\bar\theta$ is therefore
the annotators' shared tilt toward the attribute measured relative to the reference, and the
debiased policy's attribute rate is the reference's rate.

Three consequences follow. First, removal is measured against the reference, which is why
the tables report the share of DPO's increase over the reference that is removed, and why
the 0.5B reference of Signed-UltraFeedback is balanced on both attributes by construction
(Appendix~\ref{app:names}): the bias then arrives through the preference stage alone.
Second, a reference that is itself tilted stays tilted by its own amount: at 8B the
reference signs with a woman-coded name at $0.465$, and \method{} returns to $0.49$, not to
$0.50$ (Table~\ref{tab:names_v3}). Third, the estimates of the annotators do not depend on
the reference's tilt: $r \equiv 0$ at the start whatever the reference prefers, so
$\bar\theta$ measures the tilt of the labels beyond zero implicit reward, and the deviations
$\varepsilon_k$ are the same at every point of the line by item (ii) above. The audit of the
annotators is anchor-independent; only the policy's rate depends on the anchor.

\subsection{Target anchoring}
\label{app:anchor_target}

The same free direction can be fixed by a requirement on the outputs instead of by the
reference. Suppose the attribute has a fair rate $t$, one half for the name in a signature,
and write $\operatorname{logit} p = \log\frac{p}{1-p}$. Let $p$ be the reference's rate in the
pairwise sense, $\log \piref(y \mid x)/\piref(\bar y \mid x) = \operatorname{logit} p$. A
policy with tilt $a$ has log-odds $\operatorname{logit} p + a/\beta$, so the tilt that reaches
the target is
\begin{equation}
    c(t) = \beta\,\bigl[\operatorname{logit} t - \operatorname{logit} p\bigr].
    \label{eq:anchor_tilt}
\end{equation}
When the reference's log-odds vary across prompts, the rate of the tilted policy is still
continuous and strictly increasing in $c$, so the target is reached at exactly one value,
found by a one-dimensional search; \eqref{eq:anchor_tilt} is that value for constant
log-odds.

\begin{corollary}[Target anchoring]
\label{cor:anchor}
Let $(r, \theta)$ be any parameter vector and $c \in \R$. The parameter vector
$(r + c\,\delta_g,\; \theta - c\mathbf{1})$ assigns the same probability to every judgment,
has the same deviations $\varepsilon_k$, has shared mean $\bar\theta - c$, and corresponds to
the policy $\pi_c(y \mid x) \propto \pi(y \mid x)\exp\bigl(c\,\delta_{g(x,y)}/\beta\bigr)$,
whose log-odds between a response and its counterfactual exceed those of $\pi$ by $c/\beta$.
\end{corollary}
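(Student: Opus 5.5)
The plan is to treat the statement as a bookkeeping consequence of Corollary~\ref{prop:degeneracy} and Proposition~\ref{prop:compat}, checking its four claims in turn.

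\textbf{Judgment probabilities.} The first claim, that every judgment probability is unchanged, is exactly the direct computation in the proof of Corollary~\ref{prop:degeneracy}. The logit $L = r(x,y_w) - r(x,y_l) + \theta_k\,\Delta\delta$ gains $c\,\Delta\delta$ from the reward and loses $c\,\Delta\delta$ from the bias. I would cite that computation rather than repeat it. Note that it holds for an arbitrary parameter vector, not only at minimisers, which is what the statement requires.

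\textbf{Deviations and shared mean.} Next I would handle the decomposition $\theta_k = \bar\theta + \varepsilon_k$ with $\sum_k \varepsilon_k = 0$. Subtracting $c$ from every coordinate gives new mean $\frac1m\sum_k(\theta_k - c) = \bar\theta - c$. The new deviations are $(\theta_k - c) - (\bar\theta - c) = \varepsilon_k$, so they are unchanged. This is a one-line check.

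\textbf{The policy.} This is the only step with content. By Proposition~\ref{prop:compat}, the policy attached to a reward $r$ is $\pi(y\mid x) = \piref(y\mid x)\exp(r(x,y)/\beta)/Z(x)$. Substituting $r + c\,\delta_g$ gives
\[
\pi_c(y \mid x) = \frac{\piref(y\mid x)\exp\bigl(r(x,y)/\beta\bigr)\exp\bigl(c\,\delta_{g(x,y)}/\beta\bigr)}{Z_c(x)} \propto \pi(y\mid x)\exp\bigl(c\,\delta_{g(x,y)}/\beta\bigr),
\]
where $Z_c(x)$ absorbs both the old partition function and the new factor. I must check that $Z_c(x)$ is finite, so that Proposition~\ref{prop:compat} applies. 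This holds because $Z_c(x) \le e^{|c|/\beta} Z(x)$. This finiteness check is the only point I expect could need care, and it is mild.

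\textbf{Log-odds.} Take $y$ with $\delta_{g(x,y)} = 1$ and its counterfactual $\bar y$ with $\delta_{g(x,\bar y)} = 0$. The normaliser cancels in the ratio, so
\[
\log\frac{\pi_c(y\mid x)}{\pi_c(\bar y\mid x)} = \log\frac{\pi(y\mid x)}{\pi(\bar y\mid x)} + \frac{c}{\beta}.
\]
Equivalently, the tilt $a$ of Appendix~\ref{app:anchor_tilt} increases by $c$. This completes the statement.
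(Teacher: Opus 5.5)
Your proposal is correct and follows the paper's proof. The invariance of the judgment probabilities, the unchanged deviations and the shifted mean all come from Corollary~\ref{prop:degeneracy}; you check the last two directly, which is slightly cleaner because that corollary's deviation claim is phrased for minimisers. The policy and its $c/\beta$ log-odds shift then come from substituting $r + c\,\delta_g$ into the Gibbs form of Proposition~\ref{prop:compat}, with the normaliser cancelling and $\delta_g$ differing by one between $y$ and $\bar y$; your bound $Z_c(x) \le e^{|c|/\beta} Z(x)$ adds care the paper leaves implicit.
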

\begin{proof}
The first three claims are Corollary~\ref{prop:degeneracy}. The policy follows from
$\pi \propto \piref\exp(r/\beta)$ (Proposition~\ref{prop:compat}) with $r + c\,\delta_g$ in
place of $r$, and the log-odds shift is $c/\beta$ because $\delta_g$ differs by one between
$y$ and $\bar y$.
\end{proof}

Target anchoring therefore changes exactly one thing. The fit to every judgment is the same,
so nothing is paid in likelihood. The deviations are the same, so the audit of the annotators
is the same. The shared mean shifts by $-c(t)$, which means that the reference's own tilt is
now counted as bias rather than as quality. What changes is the policy, which is moved along
the attribute by $c(t)/\beta$ in log-odds, and with it the KL to the reference. The
requirement spends the one degree of freedom the data do not use, and nothing else moves.

\paragraph{How the anchor is applied.}
Three procedures produce the tilted policy. At sampling time, in closed form: the responses
that carry the attribute are reweighted by $\exp(c/\beta)$ per prompt, which for a signature
is a shift on the choice of the name and for an attribute of the whole text is a reweighting
or a rejection step. In training, with a fixed offset: on pairs that differ only in the
attribute, each taken in both orders, the loss of the two orders is
$-\log\sigma(u + b) - \log\sigma(-u - b)$ with $u$ the policy's margin, minimised at $u = -b$,
so a fixed offset $b = -c(t)$ in the logit drives the policy to carry the tilt $c(t)$ itself.
This removes a tilt a policy already has, from its own generations and with no annotator
labels, and is the natural cure for a model trained without the bias term;
Section~\ref{app:anchor_experiment} tests it. Through the reference: a reference balanced on the attribute makes the two anchors
coincide, which is the construction used at 0.5B.

\paragraph{Limits.}
Target anchoring needs an attribute with a fair value. A name has one; length and formatting
do not, and for them the reference remains the only sensible anchor, which is why
Table~\ref{tab:multipref} reads the held-out gap rather than a target rate. The target is a
normative choice, as the attribute partition is, but it is one that can be audited by
measuring the rate, whereas the reference's rate is an accident of pretraining. Moving the
policy changes which responses it produces, so judged quality has to be checked after the
tilt. Finally, the level can also be pinned from data rather than from a target: a small set
of judgments by annotators known to be unbiased breaks the flat direction, since for them
$\theta_k = 0$ is known and $a$ is then identified. We leave this to future work
(Section~\ref{sec:conclusion}).

\paragraph{Worked example.}
Take a response signed \emph{Emily Miller} and its counterfactual signed \emph{Greg Miller},
the same text otherwise. The policy's tilt $a$ is $\beta$ times the difference between the
policy's and the reference's log-odds for the first version over the second. When annotator
$k$ judges this pair, the logit is $a + \theta_k$: the labels of the swap pairs fix, for every
annotator, the sum of the policy's tilt and the annotator's bias, but not how it splits. At
initialisation $a = 0$, and after training under reference anchoring $a$ is still near zero
and $\bar\theta$ holds the shared level. For the numbers, take $\beta = 0.1$. The 8B reference
signs with a woman-coded name at $0.465$, so $\operatorname{logit} p = -0.14$ and reaching
$t = 0.5$ needs $c = 0.1 \times (0 + 0.14) = 0.014$, a shift of $0.14$ in log-odds: for a
nearly balanced reference the two anchors almost coincide. The 8B DPO policy trained on the
same labels signs with a woman-coded name at $0.967$, so $\operatorname{logit} p = 3.38$ and
curing it needs $c = -0.34$, a shift of $-3.4$ in log-odds, which the training-time procedure
obtains with the offset $b = 0.34$ on the gender coordinate.

\subsection{Experiment: a fixed offset cures the DPO policy}
\label{app:anchor_experiment}

\paragraph{Setting.}
The source is the DPO policy of Signed-UltraFeedback at 0.5B (Table~\ref{tab:names_v3}), one
per seed, which signs with a woman-coded name with probability $0.96$ and with a black-coded
name with probability $0.99$. The training pairs come from the policy itself: $3{,}000$ fresh
UltraFeedback prompts (the ones after the $8{,}000$ of the corpus) with the signing instruction,
answered by the source; the answers signed with a pool name are kept ($2{,}659$ at seed 42);
each signature is replaced by a random cell and then by the cell that differs from it in one
attribute, with the same surname, which gives one pair per attribute per answer, taken in both
orders: $10{,}636$ rows, of which $9{,}576$ are used for training and $1{,}060$ held out. No annotator and no label enter this stage. Since every pair is present
in both orders, the loss on a pair is $-\log\sigma(u+b)-\log\sigma(-u-b)$ with $u$ the policy's
margin, minimised at $u=-b$, so the offset $b$ in the logit (the term $b\,(g_w-g_l)$ of
Section~\ref{app:anchor_target}) is the only thing the stage teaches. Training is DPO with the
source as reference and the hyperparameters of Table~\ref{tab:hyperparams} ($\beta = 0.1$,
learning rate $5\times10^{-7}$, $1{,}000$ steps of $32$ pairs). The offset for a target rate $t$
is $b = -\beta s^\ast(t)$, where $s^\ast(t)$ is the log-odds shift that brings the source's
signature probabilities to $t$, found by bisection on the $300$ evaluation prompts: for parity
at seed 42, $s^\ast = 3.39$ on the gender coordinate and $5.06$ on the race coordinate.

\paragraph{Calibration.}
The exact tilt alone stops short: at seed 42 it reaches $0.736$ and $0.753$ instead of $0.50$,
two thirds of the intended shift in log-odds. Three offsets at seed 42 ($3.39/5.06$,
$4.42/6.18$ and $5.13/6.92$ in log-odds, gender/race) give achieved shifts of $2.24/3.68$,
$3.13/4.76$ and $3.79/5.51$, which lie on a line with residuals below $0.015$: achieved $=
0.888\,\text{intended} - 0.777$ on gender and $0.980\,\text{intended} - 1.285$ on race. The
policy therefore follows the offset one for one after a fixed deficit of $0.8$ and $1.3$
log-odds, which the $1{,}000$ steps leave. Every other run below inverts this line for the exact
tilt of its target; the line is fitted at seed 42 and applied unchanged to the other two
source seeds and to the targets $0.3$ and $0.7$.

\begin{table}[t]
\centering
\footnotesize
\setlength{\tabcolsep}{2pt}
\caption{A fixed offset applied to the DPO policy of Signed-UltraFeedback at 0.5B, trained on
the policy's own generations with no annotator. Rates are the probability readout of
Table~\ref{tab:names_v3}; RM$^-$ is the judge with the signature removed; KL is per token, to the
source (the DPO policy) and to the SFT reference; ${<}0.01$ marks an estimate below the
noise of the Monte Carlo estimate, which can come out negative; the held-out gap is defined in
Section~\ref{sec:setup}; $t$ is the target rate; \emph{Signed} is the share of answers that carry a
signature. Seed 42 unless stated; the parity row over three seeds gives the mean
and the 95\% interval over the source seeds.}
\label{tab:offset}
\begin{tabular}{@{}llccccccc@{}}
\toprule
\textbf{Policy} & $t$ & $p(\text{woman})$ & $p(\text{black})$ & \textbf{Signed} & \textbf{Tokens} & \textbf{RM}$^-$ & \textbf{KL} src / SFT & \textbf{Gap} \\
\midrule
DPO (source) & -- & 0.963 & 0.992 & 0.84 & 213 & $-2.75$ & -- / 0.037 & $+0.095$ \\
+ offset, exact tilt & 0.5 & 0.736 & 0.753 & 0.86 & 210 & $-2.79$ & ${<}0.01$ / 0.028 & $+0.029$ \\
+ offset, calibrated & 0.5 & 0.533 & 0.509 & 0.86 & 214 & $-2.71$ & ${<}0.01$ / 0.023 & $-0.046$ \\
+ offset, calibrated, 3 seeds & 0.5 & $0.50 \pm 0.07$ & $0.47 \pm 0.08$ & 0.86 & 213 & $-2.72$ & 0.003 / 0.022 & $-0.053$ \\
+ offset, calibrated & 0.3 & 0.262 & 0.233 & 0.84 & 217 & $-2.77$ & 0.034 / 0.024 & $-0.144$ \\
+ offset, calibrated & 0.7 & 0.681 & 0.659 & 0.85 & 206 & $-2.90$ & ${<}0.01$ / 0.027 & $+0.019$ \\
\midrule
\method{} pooled, Table~\ref{tab:names_v3} & -- & 0.545 & 0.616 & 0.86 & 213 & $-2.53$ & -- / 0.025 & $+0.005$ \\
\bottomrule
\end{tabular}
\end{table}

\paragraph{Results.}
Table~\ref{tab:offset} reports the runs. With the calibrated offset the DPO policy signs with a
woman-coded name at $0.50 \pm 0.07$ and with a black-coded name at $0.47 \pm 0.08$ over the three
source seeds, from $0.96$ and $0.99$; the targets $0.3$ and $0.7$ are reached within $0.07$ and
$0.04$, and the rate is monotone in the offset. Nothing else moves: the signing rate, the length
and the judge score with the signature removed stay within their seed spread, the KL to the
source is at the noise floor of the estimate except at the largest offset ($0.034$), and the KL to
the SFT reference falls from DPO's $0.037$ to $0.022$ at parity, below the $0.025$ of \method{},
since the part of DPO's divergence that the offset removes is the name shift itself. The held-out
gap, which marks a policy that has internalised the bias, falls from DPO's $0.095$ to $0.03$
after the exact tilt and to $-0.05$ at parity, where the cross-group accuracy ($0.50$ to $0.53$)
no longer exceeds the same-group one; at the target $0.3$ it reaches $-0.14$, the policy now
siding against the annotators on the pairs that differ in the attribute. The one remaining deficit is the
constant one of the calibration paragraph: a property of the training budget, not of the
objective, which a longer run or the second calibration point removes. Two remarks. First, the
sampled signatures lag the probability readout at mild offsets and swing past it at strong ones
(at parity, $69\%$ of the sampled signatures at seed 42 still carry a black-woman name while the
pool probabilities are balanced; at the target $0.3$, $92\%$ carry a white-man name), the
concentration of the decoder on a few names described in Appendix~\ref{app:ablations}; the
probability readout is the quantity the offset controls. Second, the offset is the same quantity
as the sampling-time tilt of Section~\ref{app:anchor_target} up to the fixed deficit, so a policy
can be cured either by one training stage on its own generations or by reweighting at sampling
time, with the same target.

\section{Datasets}
\label{app:datasets}

Table~\ref{tab:datasets} lists the two corpora, their sizes and the modifications made to each; the subsections give the construction in full. The 8B runs use the same data as
the 0.5B runs.

\begin{table}[h]
\centering
\caption{The corpora. Judgments are training / held-out rows; the held-out rows are
$10\%$ of prompts, split by prompt id. On MultiPref the $300$ evaluation prompts come from the held-out side; on
Signed-UltraFeedback they come from UltraFeedback's test split. Cross-group is the share of judgments in which exactly one response
carries the attribute.}
\label{tab:datasets}
\footnotesize
\setlength{\tabcolsep}{3pt}
\begin{tabular}{@{}p{1.6cm}p{2.7cm}p{2.0cm}p{2.7cm}p{4.0cm}@{}}
\toprule
\textbf{Corpus} & \textbf{Size} & \textbf{Annotators} & \textbf{Attributes (cross-group)} & \textbf{Modifications} \\
\midrule
MultiPref \citep{multipref} & $10{,}461$ comparisons over $5{,}323$ prompts; $27{,}812$ / $3{,}035$ judgments & $227$ real evaluators, four per comparison & length, ratio $\geq 1.5$ ($58.7\%$); formatting, markdown present ($26.4\%$) & ties dropped ($26\%$ of rows); judgments kept disaggregated with the evaluator id; attributes computed from the response texts \\
\addlinespace
Signed-Ultra\-Feedback & $7{,}177$ UltraFeedback pairs (from $8{,}000$, responses under five words dropped) in three types: $2{,}887$ quality, $1{,}462$ swap, $2{,}828$ mixed; $25{,}840$ / $2{,}868$ judgments & $60$ with planted biases in three classes of $20$, $\theta_k \in \R^2$ & woman-coded and black-coded first name in the signature ($36.4\%$ each) & signature line \texttt{--- First Last} appended, names from an 89-name pool; swap pairs duplicate one response under two names; labels re-drawn; SFT answers re-signed at random so the reference starts balanced \\
\bottomrule
\end{tabular}
\end{table}

\subsection{The MultiPref corpus and the length attribute}
\label{app:multipref}

\texttt{allenai/multipref} contains $10{,}461$ comparisons, each judged by exactly four
evaluators (two crowd workers and two experts; $227$ distinct evaluators, no overlap
between the two pools), over $5{,}323$ distinct prompts. We keep every judgment as its
own training row, so that annotator identity is available to the per-annotator model,
and drop the $26\%$ of judgments marked as ties, leaving $30{,}847$ rows. Ten percent
of prompts are held out by prompt id ($27{,}812$ train, $3{,}035$ held-out rows); the
$300$ evaluation prompts are drawn from the held-out prompts. The reference policy is
fine-tuned on the majority-chosen response of each comparison and is shared by every
method and seed.

\paragraph{Length as a binary attribute.}
A response is assigned $\delta_g = 1$ when it is markedly longer than its partner,
defined as a word-count ratio of at least $1.5$; pairs closer in length than that are
same-group: both responses have $\delta_g = 0$. Because this attribute is defined on the pair, it is not a
function of $(x,y)$: Proposition~\ref{prop:compat} and Theorem~\ref{thm:identif} need only
a $\delta_g$ per judgment and are unaffected, whereas the representability of the shared
component inside the policy's implicit reward (Proposition~\ref{prop:absorb}) holds for
length only through a per-response proxy such as the token count. The threshold is part of the attribute's definition and
not a tuning choice: assigning $\delta_g = 1$ to the longer side of every pair makes
$99.5\%$ of MultiPref comparisons cross-group and leaves no same-group comparisons to
identify the reward. At ratio $1.5$, $58.7\%$ of judgments are cross-group, the longer
response wins $73\%$ of those, and the corpus-level offline bias estimate is
$\hat\theta = 0.99$ in log-odds. Ratios of $1.25$, $2$ and $3$ give cross-group
fractions of $73\%$, $40\%$ and $21\%$ with $\hat\theta$ of $0.91$, $1.04$ and $1.05$;
the estimate is stable across thresholds.

\paragraph{Formatting as a binary attribute.}
A response is assigned $\delta_g = 1$ when it contains markdown markup. The attribute is
declared on the same $27{,}812$ training and $3{,}035$ held-out judgments as length;
$26.4\%$ of judgments are cross-group and the offline estimate is $\hat\theta = 0.98$.
Table~\ref{tab:attributes} characterises both attributes from the preference data
before any training.

\begin{table}[h]
\centering
\caption{The declared attributes on MultiPref, characterised from the preference data
before any training. \emph{Cross-group} is the fraction of judgments in which exactly
one response carries the attribute, which are the only judgments that identify
$\theta$. $\hat\theta$ is the log-odds that the attribute-carrying side wins such a
judgment, restated as odds in the next column. The last column is DPO's measured
amplification over the reference at 0.5B, in units of its own per-seed standard
deviation.}
\label{tab:attributes}
\begin{tabular}{@{}lcccc@{}}
\toprule
\textbf{Attribute} & \textbf{Cross-group} & $\hat\theta$ & \textbf{Odds} & \textbf{DPO amplification} \\
\midrule
Length & 58.7\% & 0.99 & 2.7 : 1 longer & $+96$ tokens ($65\sigma$) \\
Formatting & 26.4\% & 0.98 & 2.7 : 1 markdown & $+0.034$ ($1.6\sigma$) \\
\bottomrule
\end{tabular}
\end{table}

\subsection{The \textsc{names} corpus (Signed-UltraFeedback)}
\label{app:names}

The \textsc{names} corpus is built rather than collected: the responses come from a public
preference corpus, a name signature is added to them, and annotators whose biases are
planted relabel every pair, so the bias of every annotator is known. Two attributes are planted at once, and the
annotators are organised in classes whose biases differ. The main text calls it
Signed-UltraFeedback (Section~\ref{sec:names_v3}). We stress that the annotators are
constructed to prefer a name marker. It does not measure prejudice, and we refer to the
attributes as \emph{gender-coded} and \emph{race-coded names} rather than as gender or race
bias.

\paragraph{Prompts and responses.}
We draw $8{,}000$ preference pairs from the \texttt{train\_prefs} split of
\texttt{ultrafeedback\_binarized} \citep{ultrafeedback} and discard pairs in which either
response has fewer than five words, which leaves $7{,}177$. Each pair carries a prompt $x$, two
responses and two GPT-4 quality scores $s_1, s_2 \in [1,10]$ from the original dataset.

\paragraph{Marker.}
The marker is a signature line \texttt{--- First Last} appended to a response. First names
come from four cells (white woman, white man, black woman, black man): the audit lists of
\citet{bertrand2004emily} together with the WEAT~3 and WEAT~5 sets of
\citet{caliskan2017semantics}, $23/23/18/25$ names per cell, $89$ in all. Surnames come from a
neutral pool and are shared by both sides of a pair, so within a pair only the first name, and
hence only gender or race, differs. A response carries a two-entry attribute vector
$\delta_g(y) = (\mathbf{1}[\text{woman-coded}], \mathbf{1}[\text{black-coded}])$, read from
the signature; an unsigned response is $(0,0)$. Token balance across cells is not achievable
with published names: under the Qwen tokenizer nearly every white-coded name is one token and
most black-coded names are two or three (mean $1.1$, $1.0$, $2.1$ and $2.0$ tokens per cell).
We keep the full pool and read the rate from probabilities over every name
(Appendix~\ref{app:abl_names}).

\paragraph{Pair types.}
Three kinds of pair share the corpus, in proportion $40/20/40$. \emph{Quality} pairs are
the two real responses, both unsigned or both signed with the same name, and identify
quality alone. \emph{Swap} pairs are one response duplicated and signed with two names
that differ on exactly one attribute, with true quality margin $q=0$; they identify the
bias with quality held equal by construction. \emph{Mixed} pairs are the two real
responses signed from different cells, where quality and bias compete. The realised
counts are $2{,}887$, $1{,}462$ and $2{,}828$.

\paragraph{Annotators and labels.}
Sixty annotators sit in three classes of twenty. Annotator $k$ in class $c$ has
$\theta_k = \mu_c + \varepsilon_k$ with $\varepsilon_k \sim \mathcal{N}(0, 0.8^2 I)$. The
class means are $\mu_A = (1.2, 2.5)$, $\mu_B = (0.8, -0.5)$ and $\mu_C = (1.0, 1.0)$. Both
attributes have a population mean of $1.0$ and differ only in how much the classes disagree:
the class standard deviation is $0.16$ on the gender-coded attribute and $1.22$ on the
race-coded one, where class $B$ opposes the others. With twenty annotators per class the
realised class means are $0.98$, $1.12$ and $0.89$ on the gender-coded attribute and $2.62$,
$-0.34$ and $1.11$ on the race-coded one (pooled $1.00$ and $1.13$). Each pair receives four judgments. For each judgment
an annotator $k$ is drawn uniformly and the originally chosen response $y_1$ is preferred to
$y_2$ with probability
\[
    \sigma\!\bigl(q + \theta_k^\top(\delta_g(y_1) - \delta_g(y_2))\bigr),
    \qquad q = \kappa\,(s_1 - s_2),\ \kappa = 0.5,
\]
otherwise the label is flipped. Pairs whose responses carry the same attribute values are
thus labelled from quality alone, while the others are tilted by the annotator's bias. This
is the likelihood of \eqref{eq:biased_bt}, so the corpus tests whether the method recovers
what it is specified to recover; MultiPref is where the model may be misspecified. Ten
percent of prompts are held out by prompt id, which gives $25{,}840$ training and $2{,}868$
held-out judgments over the same annotators;
the reference is fine-tuned on the majority-chosen response of each pair, each re-signed with a
name from a random cell before fine-tuning, so the reference starts balanced on both attributes
and all of the bias arrives through the preference stage.

\paragraph{Gate.}
The following statistics are read from the labels before any training. $36.4\%$ of judgments
are cross-group on each attribute, $59.8\%$ on at least one. On swap pairs the offline log-odds
that the attribute-carrying side wins is $0.95$ for the gender-coded attribute and $0.70$ for
the race-coded one; by class, $1.01$, $0.97$ and $0.86$, and $2.30$, $-0.37$ and $0.88$. The
pooled race-coded estimate is below the gender-coded one although its planted mean is higher,
because class $B$ votes against the other two. On all cross-group rows both estimates are
$0.71$ and $0.70$, below the swap-pair values, because the quality margin adds noise wherever
the two texts differ.

\paragraph{Evaluation prompts.}
Three hundred held-out questions from \texttt{test\_prefs}, each followed by ``Sign your
answer with a name of your choice'', give the attribute rate:
the probability the policy assigns to the names of each cell at the signature position
(Appendix~\ref{app:readouts}). One hundred templated prompts present
two candidates with identical qualifications and names from different cells, in random
order, and ask for a recommendation; they are read as the share of recommendations going to
each cell and are not used in the tables.

\paragraph{Configuration.}
In summary: $20$ annotators per class, a within-class standard deviation of $0.8$, the class
means above, quality, swap and mixed pairs in proportion $40/20/40$ with half of the quality
pairs unsigned, four judgments per pair, a quality scale $\kappa = 0.5$, and data seed $0$.

\section{Experimental details}
\label{app:training}

\subsection{Models and training}

\texttt{Qwen2.5-0.5B-Instruct} is trained with full fine-tuning: fp32 master weights
under bf16 autocast, because bf16 master weights lose updates at a DPO learning rate
of $5\times10^{-7}$. \texttt{Llama-3.1-8B-Instruct} and \texttt{Mistral-7B-Instruct-v0.3}
are trained with LoRA on all linear layers (rank 32, $\alpha = 64$, fp32 adapters on a bf16 base) at learning rate
$5\times10^{-6}$, ten times the full fine-tuning rate; the rate was fixed by the
calibration of Appendix~\ref{app:abl_lora}, in which LoRA at this rate reproduces the
amplification and the removal of full fine-tuning to within seed noise, while a larger rate lets the policy move faster than the bias term; Mistral uses the same recipe unchanged. All models use $\beta = 0.1$, 1000 steps at
effective batch 32, and a separate Adam at learning rate $10^{-2}$ for the bias
parameters, since the scalars otherwise receive too little gradient to move. One SFT reference per model and corpus (a LoRA reference for the two larger models), trained for one
epoch on the majority-chosen response of each comparison, is shared by every arm and
seed; without it the implicit rewards would not be comparable across arms. Reference
log-probabilities are precomputed once per corpus and cached, and per-position
log-probabilities are computed in position chunks so that the full vocabulary logits
never materialise at once. Every arm sees the identical disaggregated judgments through
the same implementation; the arms differ only in their loss. All runs use a single shared
NVIDIA A100 80GB. A 0.5B preference run peaks at 11 to 19\,GB and takes 70 to 150
minutes depending on contention; an 8B LoRA run peaks at 21.5\,GB and takes 7 to 9
hours, a Mistral run at 17.7\,GB and 8 to 12 hours. Table~\ref{tab:hyperparams} lists the hyperparameters.

\begin{table}[h]
\centering
\caption{Hyperparameters. The reference checkpoint is shared across all methods and seeds.}
\label{tab:hyperparams}
\begin{tabular}{@{}lp{0.72\linewidth}@{}}
\toprule
\textbf{Stage} & \textbf{Setting} \\
\midrule
SFT (reference)     & 1 epoch on majority-chosen responses, effective batch 32 \\
SFT lr              & $1\times 10^{-5}$ (0.5B, full); $1\times 10^{-4}$ (8B and 7B, LoRA) \\
Preference stage    & $\beta = 0.1$, 1000 steps, effective batch 32 ($2 \times 16$) \\
Preference lr       & $5\times 10^{-7}$ (0.5B, full); $5\times 10^{-6}$ (8B and 7B, LoRA) \\
LoRA (8B and 7B)    & rank 32, $\alpha = 64$, all linear layers, fp32 adapters on a bf16 base \\
LoRA reference      & a LoRA SFT of the same shape, shared by every arm and seed \\
Sequence lengths    & at most 1280 tokens for prompt and response together, of which at most 384 for the prompt \\
Schedule            & cosine with 10\% warmup, gradient clipping 1.0 \\
Bias parameters     & separate Adam at lr $10^{-2}$, no penalty, initialised at 0 unless warm-started \\
Generation          & 300 held-out prompts, $\leq 512$ new tokens, $T = 0.7$, top-$p$ $0.9$ \\
Judge               & \texttt{Skywork-Reward-V2-Qwen3-1.7B}, truncated to 2048 tokens \\
Seeds               & 42, 123, 456 \\
\bottomrule
\end{tabular}
\end{table}

\subsection{Arms}
\label{app:arms}

\emph{Reference} is the shared SFT policy and the zero point of every comparison.
\emph{DPO} is standard DPO, that is \method{} with every $\theta_k$ frozen at zero. The
\method{} arms differ only in how the bias term of \eqref{eq:barp_dpo_loss} is indexed
(Section~\ref{sec:variants}). \emph{Pooled} learns one scalar $\theta$ per attribute
shared by every comparison. $\bar\theta+\varepsilon_k$ learns a shared mean per
attribute plus one deviation per annotator, with the mean updated on every cross-group
comparison. \emph{$\theta_k$ only} is the original BARP model, one free parameter per
annotator and nothing shared. \emph{Shuffled ids} is the $\theta_k$-only model in which
every judgment is assigned an annotator index drawn uniformly at random, independently
of who cast it, so that every $\theta_k$ collects votes from every annotator and the ids
carry no information; a mere relabelling of the ids would change nothing and is not what
this control does. $\bar\theta+\delta_c+\varepsilon_k$ inserts a per-class offset
between the mean and the individual deviation and receives the class labels. \emph{Warm} arms
initialise $\bar\theta$ (or the pooled $\theta$) at the corpus-level offline estimate,
the log-odds that the $\delta_g = 1$ side wins a cross-group comparison, instead of at
zero. \emph{Gender declared only} is the pooled arm with the race-coded attribute left
undeclared. The main text carries pooled and $\bar\theta+\varepsilon_k$; the others are
in Appendix~\ref{app:ablations}.

\subsection{Baselines}
\label{app:baselines}

R-DPO \citep{park2024disentangling} adds a length penalty
$\alpha\,(|y_w| - |y_l|)$ in tokens to the DPO logit; we use $\alpha = 0.005$, since
the published $\alpha = 0.02$ collapses the policy on MultiPref
(Appendix~\ref{app:abl_multipref}). SamPO \citep{lu2024sampo} down-samples the token
log-ratios of the longer response so that both responses contribute equally many terms
to the implicit reward. Neither reads the declared attribute, so their length
checkpoints are scored unchanged under formatting. Group-DRO DPO adapts
\citet{sagawa2020groupdro} to the DPO loss: each group carries a loss weight, updated by
exponentiated gradient with step $\eta = 0.5$ toward the group with the highest current
loss; groups are the annotator classes on Signed-UltraFeedback and the annotators on
MultiPref. Crowd-PrefRL \citep{chhan2024crowd}, adapted in the same way, moves the
weight away from the highest-loss annotator ($\eta = 0.5$). EM-DPO with MinMax-DPO
\citep{chidambaram2024emdpo} trains $K = 3$ type policies, the planted number of
classes: each annotator holds a probability of belonging to each type; the M-step runs
DPO for each type on judgments weighted by those probabilities, and the E-step updates
the probabilities from how well each type policy explains the annotator's judgments,
scored on votes the round's policies did not train on (each annotator's votes are split
into two halves that swap every round; four rounds of 200 steps, after which the three
policies train on all votes for the full 1000 steps). MinMax-DPO mixes the type policies
with the weights that minimise the largest regret over the types; the mixture answers
each prompt with type policy $k$ with probability $w_k$, so its readouts are the
weighted means of the type policies' readouts, and its vote prediction uses each
annotator's own type probabilities. No public implementation matched our data and single-GPU setting, so we reimplemented the method; departures from the original are documented in the released code.

\subsection{Readouts}
\label{app:readouts}

\paragraph{Generation.}
Each policy generates on 300 held-out prompts with temperature $0.7$, top-$p$ $0.9$ and
at most 512 new tokens.

\paragraph{Attribute rate on Signed-UltraFeedback.}
The rate is read from probabilities, not from samples. With the reference's answer body
fixed, the probability the model assigns to each of the 89 first names at the signature
position is renormalised over the pool and summed over the woman-coded and over the
black-coded names, giving $p(\text{woman})$ and $p(\text{black})$; the average over the
300 sign prompts is the rate. Sampled signature rates saturate and depend on the
decoder: the reference, balanced by probability at $0.50$ and $0.52$, signs $79\%$ of its
answers with a black-coded name at temperature $0.7$ with top-$p$ $0.9$
(Appendix~\ref{app:abl_names}). Using each arm's own body instead of the reference's
changes no number by more than $0.005$. Computing the rate requires one forward pass per prompt and name batch.

\paragraph{Attribute rate on MultiPref.}
Under length the rate is the mean token count of the generations. Under formatting it
is the markdown rate reweighted onto the reference generations' token-count distribution
(five quantile bins), so that a method which merely shortens its answers does not appear to remove bias.

\paragraph{Bias removed.}
DPO's amplification of the attribute is the gap between DPO's rate and the reference's,
and an arm's removal is the share of that gap it closes,
$(\text{rate}_{\mathrm{DPO}} - \text{rate}_{\mathrm{arm}}) /
(\text{rate}_{\mathrm{DPO}} - \text{rate}_{\mathrm{ref}})$, computed per seed against DPO
of the same seed and then averaged, so $0\%$ is DPO and $100\%$ is the reference.

\paragraph{Held-out accuracy and the gap.}
On the held-out judgments the policy's implicit reward margin $u$ of
\eqref{eq:shorthand} predicts the label by its sign. Accuracy is reported on same-group
pairs (the two responses agree on the attribute) and cross-group pairs (they differ),
and the gap is cross-group minus same-group. A policy that has internalised the
annotators' bias predicts their labels better exactly where the attribute differs, so
debiasing shows as the gap falling to zero while same-group accuracy is unchanged.

\paragraph{Judge.}
Quality is scored by \texttt{Skywork-Reward-V2-Qwen3-1.7B} \citep{skywork} on the
generations. Where an attribute-invariant transform exists the attribute is stripped
from every arm's generations equally before scoring (the signature on
Signed-UltraFeedback, the markup under formatting), written RM$^-$; under length none
exists and the raw score is reported. The judge's scale differs between the three models'
outputs, so scores are compared within a model only.

\paragraph{Distance from the reference.}
For every trained policy we report a Monte Carlo estimate of
$\mathrm{KL}(\pi_\theta \,\|\, \pi_{\mathrm{ref}})$ on the policy's own generations: the
mean over generated tokens of $\log \pi_\theta(y \mid x) - \log \pi_{\mathrm{ref}}(y \mid x)$
with $y \sim \pi_\theta$. It distinguishes a method that removes a bias by changing the policy from one that removes it by leaving the policy near the reference.

\paragraph{Vote prediction with the arm's own bias parameters.}
On the held-out judgments whose two responses differ on the attribute, the arm predicts
the voter's label by the sign of $u + \theta_k^\top(\delta_g(y_w) - \delta_g(y_l))$
with its own learned $\theta_k$ (pooled uses its single $\theta$; arms without a bias
term use $u$ alone; EM-DPO uses each annotator's type probabilities). The ceiling is
the same prediction with the planted $\theta_k$, averaged over arms and seeds.

\section{Ablations and controls}
\label{app:ablations}

Everything in this appendix except the last subsection was run on \texttt{Qwen2.5-0.5B-Instruct} with the training configuration of Section~\ref{sec:setup}, at three seeds unless a row says otherwise. It covers the variants of the bias model that
Section~\ref{sec:variants} defines and the main text does not table, why the names tables
read probabilities rather than samples, the
remaining baselines, the LoRA calibration on which the 8B training configuration rests, and a replication on \texttt{Mistral-7B-Instruct-v0.3}. Column definitions follow the main text; intervals are one standard deviation over seeds, except in the Mistral table, which uses the $95\%$ intervals of the main text.

\subsection{Variants of the bias model on Signed-UltraFeedback}
\label{app:abl_variants}

Table~\ref{tab:abl_names_v3} adds to Table~\ref{tab:names_v3} the arms not reported in the main text, defined in Appendix~\ref{app:arms}. The two
length baselines, which the rule of Section~\ref{sec:setup} excludes from this corpus, are included for completeness.

\begin{table}[h]
\centering
\caption{Signed-UltraFeedback at 0.5B: the arms not reported in the main text, under the three arms of Table~\ref{tab:names_v3} for reference. Rows without a method name are
\method{} variants. Rate columns as in Table~\ref{tab:names_v3}, with the share of DPO's
amplification removed in parentheses; votes: vote prediction with the arm's own
$\theta$ on the race-coded / gender-coded attribute overall (Table~\ref{tab:votes}).
The judge score is inside DPO's $95\%$ interval for every arm and is omitted. Mean $\pm$ sd
over three seeds; the gender-only control ran at seed 42.}
\label{tab:abl_names_v3}
\small
\setlength{\tabcolsep}{3pt}
\begin{tabular}{@{}lcccc@{}}
\toprule
\textbf{Method} & $p(\text{woman})$ (removed) & $p(\text{black})$ (removed) & \textbf{KL} & \textbf{Votes r / g} \\
\midrule
DPO & $0.964 \pm 0.002$ & $0.991 \pm 0.001$ & $0.038 \pm {<}0.001$ & 0.641 / 0.673 \\
Pooled & $0.549 \pm 0.003$ (89\%) & $0.608 \pm 0.007$ (81\%) & $0.026 \pm {<}0.001$ & 0.649 / 0.694 \\
$\bar\theta+\varepsilon_k$ & $0.542 \pm 0.004$ (91\%) & $0.595 \pm 0.007$ (83\%) & $0.026 \pm {<}0.001$ & 0.752 / 0.745 \\
\midrule
Class, $\bar\theta+\delta_c+\varepsilon_k$ & $0.526 \pm 0.004$ (94\%) & $0.582 \pm 0.006$ (86\%) & $0.026 \pm {<}0.001$ & 0.749 / 0.740 \\
Shuffled ids & $0.849 \pm 0.009$ (25\%) & $0.925 \pm 0.008$ (14\%) & $0.034 \pm {<}0.001$ & 0.645 / 0.694 \\
Pooled, gender declared only & $0.836$ (28\%) & $0.986$ (1\%) & 0.034 & 0.636 / 0.689 \\
\midrule
R-DPO ($\alpha = 0.005$) & $0.967 \pm 0.002$ ($-1\%$) & $0.992 \pm 0.001$ (0\%) & $0.054 \pm 0.002$ & 0.634 / 0.659 \\
SamPO & $0.940 \pm 0.001$ (5\%) & $0.981 \pm 0.002$ (2\%) & $0.041 \pm 0.001$ & 0.625 / 0.669 \\
\bottomrule
\end{tabular}
\end{table}

The class arm is the best arm on both attributes at every seed, ahead of pooled by
$0.02$ on each, the same margin on the attribute the classes agree on and on the one
they do not, so the gain is not specific to disagreement. Its race-coded class offsets,
centred like the planted ones, come out at $+0.98$, $-1.00$ and $+0.03$ (three-seed means)
against planted offsets of $+1.49$, $-1.47$ and $-0.02$ around the population mean: the order
is recovered, the magnitudes are attenuated by about a third, and its vote prediction is at the ceiling, as for
$\bar\theta+\varepsilon_k$. When the classes are known it is a reasonable choice; it improves the policy by $0.02$ and adds nothing to what the deviations already reveal, which is why it is reported here rather than in the main text.
Shuffling the ids removes $25\%$ and $14\%$. Its sixty parameters converge to noisy copies of one value near $0.4$ (learned means $0.45$ and $0.39$, correlation with the planted
$\theta_k$ of $0.12$ and $-0.16$, means over three seeds), its vote prediction is at pooled's level, and its
KL is close to DPO's: with the identities destroyed, the parameters neither describe the annotators nor reach the mean within the training budget. Declaring only the gender-coded attribute leaves
the race-coded one at DPO's level ($0.986$) and removes only $28\%$ of the declared one.
The undeclared bias is absorbed by the policy, which concentrates its signatures in the
black-woman cell, and the declared attribute is carried along with it. Both attributes must be
declared for either to be removed, which is the substitution pattern of
Section~\ref{sec:formatting} inside one corpus. The two length methods remove at most
$5\%$ of either attribute, and R-DPO's length penalty raises the KL to the reference to
$0.054$ without changing the signature: a method built for length has no effect on a name signature.

\subsection{MultiPref: variants, warm starts and the remaining baselines}
\label{app:abl_multipref}

Table~\ref{tab:abl_multipref} is the length experiment of Table~\ref{tab:multipref} with
every arm we ran at 0.5B: the free-$\theta_k$ arm of the original BARP model, shuffled
ids, the two \method{} arms warm-started at the corpus-level offline estimate
$\hat\theta = 0.99$ instead of zero, and R-DPO at its published strength.
Group-DRO DPO is in Table~\ref{tab:multipref} at three seeds. Crowd-PrefRL
\citep{chhan2024crowd}, adapted to the DPO loss, moves weight away from the annotators with the
highest loss; at one seed it leaves the answers at $386$ tokens and the gap at $0.130$, against
DPO's $385$ and $0.126$.

\begin{table}[h]
\centering
\caption{MultiPref, length declared, every arm at 0.5B. Columns as in
Table~\ref{tab:multipref}; same / cross is held-out accuracy on same-group and
cross-group pairs, whose difference is the gap; \emph{mean} $\theta$ is the mean of the learned
bias parameters (the single $\theta$ for pooled). Core arms are mean $\pm$ sd over three seeds; other rows are single-seed
controls. These runs used an SFT reference trained before the one of Table~\ref{tab:multipref}, on
the same data; the reference and DPO rows here are from the same runs, so the removal shares are
computed within this table and differ from Table~\ref{tab:multipref} by a few points.}
\label{tab:abl_multipref}
\small
\begin{tabular}{@{}lccccc@{}}
\toprule
\textbf{Method} & \textbf{Tokens} & \textbf{Removed} & \textbf{RM} & \textbf{same / cross} & \textbf{mean} $\theta$ \\
\midrule
Reference (SFT) & 294.6 & --- & $-1.15$ & --- & --- \\
DPO & $385.4 \pm 3.5$ & --- & $-0.67$ & 0.521 / 0.640 & --- \\
\method{} (pooled) & $339.0 \pm 1.1$ & 51\% & $-0.69$ & 0.516 / 0.526 & 0.79 \\
\method{} (pooled, warm) & 329.5 & 62\% & $-0.67$ & 0.524 / 0.515 & 0.84 \\
\method{} ($\theta_k$ only) & $370.1 \pm 1.9$ & 17\% & $-0.67$ & 0.520 / 0.602 & 0.42 \\
\method{} (shuffled ids) & 372.1 & 16\% & $-0.77$ & 0.521 / 0.609 & 0.44 \\
\method{} ($\bar\theta+\varepsilon_k$) & $336.3 \pm 4.1$ & 54\% & $-0.68$ & 0.520 / 0.522 & 0.84 \\
\method{} ($\bar\theta+\varepsilon_k$, warm) & 335.4 & 56\% & $-0.73$ & 0.518 / 0.514 & 0.89 \\
R-DPO ($\alpha = 0.02$) & 137.8 & over-corr. & $-1.49$ & 0.467 / 0.287 & --- \\
R-DPO ($\alpha = 0.005$) & 309.7 & 84\% & $-0.90$ & 0.517 / 0.469 & --- \\
SamPO & 417.9 & $-34\%$ & $-0.72$ & 0.515 / 0.617 & --- \\
\bottomrule
\end{tabular}
\end{table}

Three observations follow. First, the free-$\theta_k$ arm removes $17\%$ with its mean at $0.42$, and shuffling the ids gives the same numbers ($16\%$, $0.44$): the $227$ free parameters do not remove the bias, and the identities play no part in that failure. Second, warm-starting the mean at $0.99$ gives the same result from the opposite direction: the pooled $\theta$ started at
$0.99$ settles at $0.84$ against $0.77$ to $0.81$ from zero, and the shared parameter
$\bar\theta$ at $0.78$ against $0.67$ to $0.72$, and the
resulting policies are indistinguishable. The policy never drives the parameter toward
zero, which is what an absorption account, in which the network out-competes the scalar
for the signal both can represent (Proposition~\ref{prop:absorb}), would predict. Third,
R-DPO at $\alpha = 0.02$ collapses the policy: answers fall to $138$ tokens, $157$ below
the reference, cross-group accuracy drops to $0.287$, so the policy prefers the shorter
answer two times in three, and the judge score is worse than the untrained reference's.
The heterogeneity baselines reweight the annotators strongly (weight sd $15$ and $6$ around a mean of $1$) and do not change length.

Under the formatting attribute the same variants repeat the pattern (three seeds each). The
free-$\theta_k$ arm and the shuffled arm learn means of $0.31$ and $0.32$ and leave the held-out
gap at $0.051$ and $0.056$, against $0.066$ for DPO; the pooled arm ($\theta = 0.71$) and the
shared-mean arm ($\bar\theta = 0.62$) close it to $0.004$ and $0.000$. The free mean is $0.50$
of $\bar\theta$, against $0.61$ under length. R-DPO at $\alpha = 0.02$, trained against the
earlier reference (Table~\ref{tab:abl_multipref}), raises the markdown rate to $0.700$, above
that build's DPO at $0.620$, with same-group accuracy at $0.389$: the optimisation pressure that
DPO placed on length has shifted to markdown in the shortened answers.

\subsection{Reading the signature from probabilities, not samples}
\label{app:abl_names}

Every names table reads the attribute rate from the probabilities the policy assigns to the
names in the pool (Appendix~\ref{app:readouts}), not from its sampled signatures. This
subsection shows why, on Signed-UltraFeedback. Table~\ref{tab:names_samples} gives both readouts
for the reference and four arms. The sampled rate is the share of signed answers to the $300$
sign prompts, generated at temperature $0.7$ and top-$p$ $0.9$, whose name is woman-coded or
black-coded.

\begin{table}[h]
\centering
\caption{Signed-UltraFeedback at 0.5B: the attribute rate read from sampled signatures (share of
signed answers) and from probabilities (Table~\ref{tab:names_v3}). Mean over three seeds.}
\label{tab:names_samples}
\small
\begin{tabular}{@{}lcccc@{}}
\toprule
\textbf{Method} & \textbf{Sampled, woman} & \textbf{Sampled, black} & $p(\text{woman})$ & $p(\text{black})$ \\
\midrule
Reference (SFT) & 0.61 & 0.79 & 0.498 & 0.517 \\
DPO & 1.00 & 1.00 & 0.964 & 0.991 \\
\method{} (pooled) & 0.74 & 0.91 & 0.549 & 0.608 \\
\method{} ($\bar\theta+\varepsilon_k$) & 0.69 & 0.91 & 0.542 & 0.595 \\
\method{} (shuffled ids) & 0.99 & 1.00 & 0.849 & 0.925 \\
\bottomrule
\end{tabular}
\end{table}

The two readouts disagree in three ways. First, the decoder distorts the rate before any
preference training. The reference is balanced by probability ($0.50$ and $0.52$), yet $79\%$
of its signed answers carry a black-coded name. The black-coded names share their first subword
(La-, Ta-, De-, Ja-), so at the first token of the name the probability is concentrated on a few
prefixes that top-$p$ sampling keeps, while the many distinct one-token white-coded names are
cut. Second, sampled rates saturate. DPO and the shuffled-id arm both sign almost every answer
with a black woman's name, so the samples cannot tell them apart, while by probability shuffled
ids sit at $0.85$ and $0.93$ against DPO's $0.96$ and $0.99$. Third, a shift in the sampled rate
is not evidence of annotator bias. Over all $300$ answers, with unsigned ones counted as not
black-coded, DPO raises the black-coded share from $0.70$ to $0.85$, by $0.15$. Relabelling the $89$-name pool at random $1{,}000$ times gives a null with standard
deviation $0.20$ and a $95\%$ range from $-0.37$ to $+0.35$ (one-sided $p = 0.22$, three DPO
seeds). Three names, \emph{Latisha}, \emph{Latonya} and \emph{Latoya}, carry $87\%$ of DPO's
signatures, and removing them from the black-coded set turns the shift into $-0.40$. The
probability readout reads every name in the pool and avoids all three problems.

\subsection{LoRA calibration at 0.5B}
\label{app:abl_lora}

Before the 8B runs, which use LoRA, we checked at 0.5B that a rank-32 adapter shows the same amplification and the same removal as full fine-tuning, and at which learning rate. The corpus was Signed-UltraFeedback with its balanced reference; the policy was the reference plus a LoRA adapter (rank 32, $\alpha = 64$, all linear layers, fp32 adapters on a bf16 base), trained for 1000 steps at seed 42 with DPO and with pooled \method{}, at learning rates $5\times10^{-6}$ and $2\times10^{-5}$, against $5\times10^{-7}$ for full fine-tuning.

\begin{table}[h]
\centering
\caption{LoRA calibration on Signed-UltraFeedback at 0.5B, seed 42. Rates, removed, RM$^-$
and KL as in Table~\ref{tab:names_v3}; $\theta$ is the learned pooled bias on the
gender-coded attribute. Removed is against the DPO run of the same training setting.}
\label{tab:lora}
\small
\setlength{\tabcolsep}{4pt}
\begin{tabular}{@{}lcccccc@{}}
\toprule
\textbf{Run} & $p(\text{woman})$ & $p(\text{black})$ & \textbf{Removed (g / r)} & \textbf{RM$^-$} & \textbf{KL} & $\theta$ \\
\midrule
Reference & 0.498 & 0.517 & --- & $-3.30$ & --- & --- \\
Full fine-tuning, DPO & 0.964 & 0.992 & --- & $-2.75$ & 0.037 & --- \\
Full fine-tuning, \method{} (pooled) & 0.545 & 0.616 & 90\% / 79\% & $-2.53$ & 0.025 & 0.68 \\
LoRA $5\times10^{-6}$, DPO & 0.952 & 0.988 & --- & $-2.71$ & 0.038 & --- \\
LoRA $5\times10^{-6}$, \method{} (pooled) & 0.529 & 0.592 & 93\% / 84\% & $-2.56$ & 0.027 & 0.68 \\
LoRA $2\times10^{-5}$, DPO & 0.996 & 0.998 & --- & $-2.46$ & 0.032 & --- \\
LoRA $2\times10^{-5}$, \method{} (pooled) & 0.637 & 0.734 & 72\% / 55\% & $-2.39$ & 0.025 & 0.67 \\
\bottomrule
\end{tabular}
\end{table}

At $5\times10^{-6}$, ten times the full fine-tuning rate, LoRA reproduces full fine-tuning on every readout: DPO amplifies to $0.95$ and $0.99$ against $0.96$ and $0.99$, pooled
removes $93\%$ and $84\%$ against $90\%$ and $79\%$, with the same learned $\theta$ ($0.68$)
and nearly the same KL ($0.027$ against $0.025$). A rank-32 adapter therefore suffices to show both the amplification and its removal, and this rate is used for the 8B runs. At $2\times10^{-5}$ DPO amplifies more ($1.00$) and pooled removes
clearly less ($72\%$ and $55\%$) although it learns nearly the same $\theta$ ($0.67$): the
policy moves faster than the bias term grows, so part of the bias is in the policy
before the scalar absorbs it. The rule is therefore a rate of about ten times the full fine-tuning rate and no higher, and the check to repeat on any new model is that DPO
under LoRA reaches the amplification of full fine-tuning.

\subsection{Robustness to the model family: Mistral-7B}
\label{app:abl_mistral}

The two models of Section~\ref{sec:experiments} differ in scale and in family. A third
model, \texttt{Mistral-7B-Instruct-v0.3} \citep{mistral7b}, differs from both in family
and in tokenizer, which is what the name result depends on, since the tokenizer decides
how a signature is split. It is trained with the LoRA recipe of the 8B model unchanged
(Appendix~\ref{app:training}), with its own SFT reference per corpus, on the reference,
DPO and the two \method{} variants at three seeds, and without baselines. It is therefore
a check that the method does not depend on the model, not a comparison.
Table~\ref{tab:mistral} reports both corpora.

On Signed-UltraFeedback, Mistral repeats Llama 8B at every number: DPO signs with a
woman-coded and a black-coded name at $0.99$, the two \method{} variants return to $0.50$
and $0.54$ to $0.55$, removing $93$ to $94\%$ and $90$ to $92\%$ of DPO's shift, at two
thirds of DPO's KL and a judge score within DPO's spread. The per-annotator parameters
predict the held-out votes at the planted ceiling on every class, with the pooled scalar
at chance in the class that disagrees (Table~\ref{tab:votes}). On MultiPref, under
DPO the policy produces answers $82$ tokens longer than the reference and opens a gap of
$0.11$, as at 8B. \method{} removes about half of that increase, $48$ to $52\%$ against
$46$ to $50\%$ at 8B, and closes the gap to $0.05$ and $0.04$, with learned scalars of
$0.61$ and $0.66$ against $0.69$ to $0.71$ at 8B. The interval on removal is wide because
at one seed the policy lengthens more than at the other two, while the gap, which does not
depend on the reference's own length, is stable across seeds.

\begin{table}[h]
\centering
\caption{\texttt{Mistral-7B-Instruct-v0.3} with LoRA. Top: Signed-UltraFeedback, columns as in
Table~\ref{tab:names_v3}. Bottom: MultiPref with length declared, columns as in
Table~\ref{tab:multipref}. Mean and 95\% interval over seeds 42, 123 and 456.}
\label{tab:mistral}
\small
\setlength{\tabcolsep}{3pt}
\begin{tabular}{@{}lcccccc@{}}
\toprule
\textbf{Method} & $p(\text{woman})$ & \textbf{Removed} & $p(\text{black})$ & \textbf{Removed} & \textbf{RM$^-$} & \textbf{KL} \\
\midrule
Reference (SFT) & 0.466 & --- & 0.503 & --- & $0.13$ & --- \\
DPO & $0.988 \pm 0.008$ & --- & $0.995 \pm 0.004$ & --- & $1.12 \pm 0.17$ & $0.031 \pm 0.004$ \\
\method{} (pooled) & $0.505 \pm 0.009$ & $93 \pm 2\%$ & $0.552 \pm 0.002$ & $90 \pm 0\%$ & $1.06 \pm 0.16$ & $0.022 \pm 0.003$ \\
\method{} ($\bar\theta+\varepsilon_k$) & $0.499 \pm 0.010$ & $94 \pm 2\%$ & $0.542 \pm 0.004$ & $92 \pm 1\%$ & $1.19 \pm 0.26$ & $0.023 \pm 0.002$ \\
\bottomrule
\end{tabular}

\medskip
\begin{tabular}{@{}lccccc@{}}
\toprule
\textbf{Method} & \textbf{Tokens} & \textbf{Removed} & \textbf{Gap} & \textbf{RM} & \textbf{KL} \\
\midrule
Reference (SFT) & 312.7 & --- & --- & $3.52$ & --- \\
DPO & $394.8 \pm 6.5$ & --- & $0.107 \pm 0.029$ & $5.06 \pm 0.19$ & $0.030 \pm 0.003$ \\
\method{} (pooled) & $355.3 \pm 22.5$ & $48 \pm 29\%$ & $0.049 \pm 0.012$ & $4.70 \pm 0.11$ & $0.028 \pm 0.002$ \\
\method{} ($\bar\theta+\varepsilon_k$) & $351.9 \pm 14.2$ & $52 \pm 19\%$ & $0.041 \pm 0.041$ & $4.79 \pm 0.48$ & $0.029 \pm 0.002$ \\
\bottomrule
\end{tabular}
\end{table}

\section{Vote prediction with the learned bias parameters}
\label{app:votes}

Section~\ref{sec:votes} states that the per-annotator parameters describe the annotators.
This is the measurement behind that claim. On the held-out judgments whose two responses differ on the attribute, each arm predicts the
voter's label from its implicit reward margin plus that voter's learned bias, as defined in
Appendix~\ref{app:readouts}; the ceiling uses the planted $\theta_k$. The ceiling lies below $1$ because the votes are drawn from the likelihood rather than
being deterministic.

Table~\ref{tab:votes} reads the race-coded attribute by the voter's class, since that is
the attribute the classes disagree on, and both attributes overall. Two things stand out.
The $\bar\theta+\varepsilon_k$ variant sits at the ceiling on every class and every model
without being given the class labels: sixty annotators with about four hundred judgments
each are enough to locate every one of them. The pooled scalar cannot do this. In class
B, whose bias points against the mean it learned, it predicts at chance. The
$\bar\theta+\varepsilon_k$ parameters also recover the planted values, correlating with
them at $r = 0.95$ on the gender-coded attribute and $0.98$ on the race-coded one, and
its learned class means come out in the planted order, $1.90$, $-0.22$ and $0.83$ against
the planted $2.62$, $-0.34$ and $1.11$.

\begin{table}[t]
\centering
\caption{Signed-UltraFeedback: share of the held-out judgments whose two responses differ on the attribute that are predicted correctly, by the voter's class for the race-coded attribute and overall for both. The ceiling is the same prediction with the planted $\theta_k$, averaged over the arms' seeds. Means over three seeds. Classes $A$, $B$ and $C$ as in Appendix~\ref{app:names}; class $B$ leans against the race-coded attribute.}
\label{tab:votes}
\small
\setlength{\tabcolsep}{5pt}
\begin{tabular}{@{}lccccc@{}}
\toprule
\textbf{Method} & \textbf{Race, A} & \textbf{Race, B} & \textbf{Race, C} & \textbf{Race, all} & \textbf{Gender, all} \\
\midrule
\multicolumn{6}{@{}l}{\emph{Qwen2.5-0.5B-Instruct, full fine-tuning}} \\
Ceiling (planted $\theta_k$) & 0.860 & 0.694 & 0.703 & 0.752 & 0.749 \\
DPO & 0.774 & 0.497 & 0.650 & 0.641 & 0.673 \\
\method{} (pooled) & 0.765 & 0.500 & 0.678 & 0.649 & 0.694 \\
\method{} ($\bar\theta+\varepsilon_k$) & 0.857 & 0.690 & 0.707 & 0.752 & 0.745 \\
Group-DRO DPO & 0.715 & 0.525 & 0.632 & 0.625 & 0.666 \\
EM-DPO + MinMax-DPO & 0.788 & 0.544 & 0.660 & 0.665 & 0.677 \\
\midrule
\multicolumn{6}{@{}l}{\emph{Llama-3.1-8B-Instruct, LoRA}} \\
Ceiling (planted $\theta_k$) & 0.863 & 0.700 & 0.708 & 0.757 & 0.756 \\
DPO & 0.779 & 0.503 & 0.681 & 0.655 & 0.700 \\
\method{} (pooled) & 0.785 & 0.488 & 0.687 & 0.655 & 0.703 \\
\method{} ($\bar\theta+\varepsilon_k$) & 0.863 & 0.699 & 0.710 & 0.758 & 0.753 \\
Group-DRO DPO & 0.734 & 0.505 & 0.652 & 0.631 & 0.682 \\
\midrule
\multicolumn{6}{@{}l}{\emph{Mistral-7B-Instruct-v0.3, LoRA}} \\
Ceiling (planted $\theta_k$) & 0.853 & 0.712 & 0.717 & 0.761 & 0.753 \\
DPO & 0.745 & 0.530 & 0.676 & 0.652 & 0.679 \\
\method{} (pooled) & 0.749 & 0.513 & 0.676 & 0.647 & 0.685 \\
\method{} ($\bar\theta+\varepsilon_k$) & 0.835 & 0.701 & 0.712 & 0.750 & 0.744 \\
\bottomrule
\end{tabular}
\end{table}

\end{document}